\newtheorem{theorem}{Theorem}
\newtheorem{lemma}{Lemma}
\newtheorem{claim}{Claim}
\renewcommand{\Pr}{\mathbb{P}}
\newcommand{\Ind}[1]{\mathbb{I}_{\{#1\}}}
\newcommand{\naturals}{\mathbb{N}}
\newcommand{\E}{\mathbb{E}}
\newcommand{\Sm}{\mathbb{S}}
\newcommand{\Qm}{\mathbb{Q}}
\newcommand{\argmin}[1]{\underset{#1}{\mathrm{argmin}}}
\newcommand{\argmax}[1]{\underset{#1}{\mathrm{argmax}}}
\newcommand{\KL}{\mathrm{D}_{\mathrm{\tiny KL}}}
\newcommand{\Acal}{\mathcal{A}}
\newcommand{\Ocal}{\mathcal{O}}
\newcommand{\fhat}{\widehat{f}}
\renewcommand{\eqref}[1]{Eq.~(\ref{#1})}
\newcommand{\lemref}[1]{Lemma~\ref{#1}}
\newcommand{\thmref}[1]{Thm.~\ref{#1}}
\newcommand{\appref}[1]{Appendix~\ref{#1}}
\title{Online Learning with Switching Costs and Other Adaptive Adversaries}
\author{\\Nicol\`{o} Cesa-Bianchi\\
DSI, Universit\`{a} degli Studi di Milano, Italy \\
\texttt{nicolo.cesa-bianchi@unimi.it}
\and
\\Ofer Dekel\\
Microsoft Research, Redmond WA, USA\\
\texttt{oferd@microsoft.com}
\and
\\Ohad Shamir\\
Microsoft Research and the Weizmann Institute\\
\texttt{ohad.shamir@weizmann.ac.il}
}
\date{}
\begin{document}

\maketitle

\begin{abstract}
We study the power of different types of adaptive (nonoblivious)
adversaries in the setting of prediction with expert advice, under
both full-information and bandit feedback. We measure the player's
performance using a new notion of regret, also known as policy regret,
which better captures the adversary's adaptiveness to the player's
behavior. In a setting where losses are allowed to drift, we
characterize ---in a nearly complete manner--- the power of adaptive
adversaries with bounded memories and switching costs. In particular,
we show that with switching costs, the attainable rate with bandit
feedback is $\widetilde{\Theta}(T^{2/3})$. Interestingly, this rate is
significantly worse than the $\Theta(\sqrt{T})$ rate attainable with
switching costs in the full-information case. Via a novel reduction
from experts to bandits, we also show that a bounded memory adversary
can force $\widetilde{\Theta}(T^{2/3})$ regret even in the full
information case, proving that switching costs are easier to control
than bounded memory adversaries. Our lower bounds rely on a new
stochastic adversary strategy that generates loss processes with
strong dependencies.
\end{abstract}

\section{Introduction}
An important instance of the framework of prediction with expert
advice ---see, e.g., \cite{CesaBianchi:06}--- is defined as the
following repeated game, between a randomized player with a finite and fixed set
of available actions and an adversary. At the beginning of each round
of the game, the adversary assigns a loss to each action. Next, the
player defines a probability distribution over the actions, draws an
action from this distribution, and suffers the loss associated with
that action. The player's goal is to accumulate loss at the smallest
possible rate, as the game progresses. Two versions of this game are
typically considered: in the \emph{full-information feedback} version
of the game, at the end of each round, the player observes the adversary's
assignment of loss values to each action. In the \emph{bandit feedback}
version, the player only observes the loss associated with his chosen action,
but not the loss values of other actions.

We assume that the adversary is \emph{adaptive} (also called
\emph{nonoblivious} by \cite{CesaBianchi:06} or \emph{reactive} by
\cite{Munos:10}), which means that the adversary chooses the loss
values on round $t$ based on the player's actions on rounds $1\ldots
t-1$. We also assume that the adversary is deterministic and has
unlimited computational power. These assumptions imply that the
adversary can specify his entire strategy before the game begins. In
other words, the adversary can perform all of the calculations needed
to specify, in advance, how he plans to react on each round to any
sequence of actions chosen by the player.

More formally, let $\Acal$ denote the finite set of actions and let $X_t$
denote the player's random action on round $t$.  We adopt the notation
$X_{1:t}$ as shorthand for the sequence $X_1 \ldots X_t$. We assume
that the adversary defines, in advance, a sequence of
\emph{history-dependent loss functions} $f_1, f_2,\ldots$. The input to
each loss function $f_t$ is the entire history of the player's actions
so far, therefore the player's loss on round $t$ is $f_t(X_{1:t})$.
Note that the player doesn't observe the functions $f_t$, only the
losses that result from his past actions. Specifically, in the bandit
feedback model, the player observes $f_t(X_{1:t})$ on round $t$,
whereas in the full-information model, the player observes
$f_t(X_{1:t-1},x)$ for all $x\in \Acal$.

On any round $T$, we evaluate the player's performance so far using
the notion of \emph{regret}, which compares his cumulative loss on the
first $T$ rounds to the cumulative loss of the best fixed action in
hindsight.  Formally, the player's regret on round $T$ is defined as
\begin{equation} \label{eqn:policyRegret}
R_T ~=~ \sum_{t=1}^T f_t(X_{1:t}) - \min_{x \in \Acal} \sum_{t=1}^T f_t(x\ldots x) ~~.
\end{equation}
$R_T$ is a random variable, as it depends on the randomized action
sequence $X_{1:t}$. Therefore, we also consider the \emph{expected
  regret} $\E[R_T]$. This definition is the same as the one used in
\cite{Merhav:02} and \cite{Arora:12} (in the latter, it is called
\emph{policy regret}), but differs from the more common definition of
expected regret
\begin{equation} \label{eqn:oldRegret}
\E\left[\sum_{t=1}^T f_t(X_{1:t}) - \min_{x \in \Acal} \sum_{t=1}^T f_t(X_{1:t-1},x)\right]~.
\end{equation}
The definition in \eqref{eqn:oldRegret} is more common in the
literature (e.g., \cite{Auer:02,McMahan:04,Dani:06,Munos:10}), but is
clearly inadequate for measuring a player's performance against an
adaptive adversary. Indeed, if the adversary is adaptive, the quantity
$f_t(X_{1:t-1},x)$
is hardly interpretable ---see \cite{Arora:12}
for a more detailed discussion.

In general, we seek algorithms for which $\E[R_T]$ can be bounded by a
sublinear function of $T$, implying that the per-round expected
regret, $\E[R_T]/T$, tends to zero. Unfortunately, \cite{Arora:12}
shows that arbitrary adaptive adversaries can easily force the
regret to grow linearly. Thus, we need to focus on (reasonably) weaker
adversaries, which have constraints on the loss functions
they can generate.

The weakest adversary we discuss is the \emph{oblivious adversary},
which determines the loss on round $t$ based only on the current
action $X_t$. In other words, this adversary is oblivious to the
player's past actions.  Formally, the oblivious adversary is
constrained to choose a sequence of loss functions that satisfies
$\forall t$, $\forall x_{1:t} \in \Acal^t$, and $\forall x'_{1:t-1} \in \Acal^{t-1}$,
\begin{equation}
f_t(x_{1:t}) ~=~ f_t(x'_{1:t-1}, x_t)~~.
\label{eqn:oblivious}
\end{equation}
The majority of previous work in online learning focuses on oblivious
adversaries. When dealing with oblivious adversaries, we denote the
loss function by $\ell_t$ and omit the first $t-1$ arguments. With
this notation, the loss at time $t$ is simply written as
$\ell_t(X_t)$.

For example, imagine an investor that invests in a single
stock at a time. On each trading day he invests in one stock and suffers losses
accordingly. In this example, the investor is the player and the stock
market is the adversary. If the investment amount is small, the
investor's actions will have no measurable effect on the market, so
the market is oblivious to the investor's actions. Also note that this
example relates to the full-information feedback version of the game, as the
investor can see the performance of each stock at the end of each
trading day.

A stronger adversary is the \emph{oblivious adversary with switching
  costs}.  This adversary is similar to the oblivious adversary defined
above, but charges the player an additional switching cost of~$1$
whenever $X_{t} \neq X_{t-1}$. More formally, this adversary defines
his sequence of loss functions in two steps: first he chooses an
oblivious sequence of loss functions, $\ell_1,\ell_2\ldots$, which
satisfies the constraint in \eqref{eqn:oblivious}.
Then, he sets
$f_1(x) = \ell_1(x)$, and
\begin{equation}
\label{eqn:switchingDef}
\forall~t\geq 2,~~~f_t(x_{1:t}) ~=~ \ell_t(x_t) + \Ind{x_t \neq x_{t-1}}~~.
\end{equation}
This is a very natural setting. For example, let us consider again
the single-stock investor, but now assume that each trade has a fixed
commission cost. If the investor keeps his position in a stock for
multiple trading days, he is exempt from any additional fees, but when
he sells one stock and buys another, he incurs a fixed
commission. More generally, this setting (or simple generalizations of
it) allows us to capture any situation where choosing a different
action involves a costly change of state. In the paper, we will also
discuss a special case of this adversary, where the loss function
$\ell_t(x)$ for each action is sampled i.i.d.\ from a fixed
distribution.

The switching costs adversary defines the loss on round $t$ as a
function of $X_t$ and $X_{t-1}$, and is therefore a special case of a
more general adversary called an \emph{adaptive adversary with a
  memory of $1$}. This adversary is constrained to choose loss
functions that satisfy $\forall t$, $\forall x_{1:t} \in \Acal^t$, and $\forall x'_{1:t-2} \in \Acal^{t-2}$,
\begin{equation}
f_t(x_{1:t}) ~=~ f_t(x'_{1:t-2}, x_{t-1},x_t)~~.
\label{eqn:onemem}
\end{equation}
This adversary is more general than the switching costs adversary because his loss
functions can depend on the previous action in an arbitrary way.
We can further strengthen this adversary and define the \emph{bounded
  memory adaptive adversary}, which has a bounded memory of an
arbitrary size. In other words, this adversary is allowed to set his
loss function based on the player's $m$ most recent past actions,
where $m$ is a predefined parameter.  Formally, the bounded memory
adversary must choose loss functions that satisfy, $\forall t$, $\forall x_{1:t} \in \Acal^t$, and $\forall x'_{1:t-m-1} \in \Acal^{t-m-1}$,
$$
f_t(x_{1:t}) ~=~ f_t(x'_{1:t-m-1},x_{t-m:t})~~.
$$
In the information theory literature, this setting is called
\emph{individual sequence prediction against loss functions with
  memory} \cite{Merhav:02}.

In addition to the adversary types described above, the bounded memory
adaptive adversary has additional interesting special cases. One of
them is the \emph{delayed feedback oblivious adversary} of
\cite{Mesterharm:05}, which defines an oblivious loss sequence, but
reveals each loss value with a delay of $m$ rounds.
Since the loss at time $t$ depends on the player's action at time
$t-m$, this adversary is a special case of a bounded memory adversary
with a memory of size $m$. The delayed feedback adversary is not a
focus of our work, and we present it merely as an interesting special
case.

So far, we have defined a succession of adversaries of different
strengths. This paper's goal is to understand the upper and lower
bounds on the player's regret when he faces these
adversaries. Specifically, we focus on how the expected regret depends
on the number of rounds, $T$, with either full-information or bandit
feedback.

\subsection{The Current State of the Art}

Different aspects of this problem have been previously studied and the
known results are surveyed below and summarized
in Table \ref{tbl:bounds}. Most of these previous results rely
on the additional assumption that the range of the loss functions is
bounded in a fixed interval, say $[0,C]$. We explicitly make note of this
because our new results require us to slightly generalize this
assumption.

As mentioned above, the oblivious adversary has been studied
extensively and is the best understood of all the adversaries
discussed in this paper. With full-information feedback, both the
\emph{Hedge} algorithm \cite{LW94,FS97} and the \emph{follow the
  perturbed leader (FPL)} algorithm \cite{Kalai:05} guarantee a
regret of $\Ocal(\sqrt{T})$, with a matching lower bound of
$\Omega(\sqrt{T})$ ---see, e.g., \cite{CesaBianchi:06}.  Analyses of
Hedge in settings where the loss range may vary over time have
also been considered ---see, e.g., \cite{CMS07}.
The oblivious setting with bandit feedback, where the player only
observes the incurred loss $f_t(X_{1:t})$, is called the
\emph{nonstochastic (or adversarial) multi-armed bandit} problem. In
this setting, the Exp3 algorithm of \cite{Auer:02}
guarantees the same regret $\Ocal(\sqrt{T})$ as the full-information
setting, and clearly the full-information lower bound
$\Omega(\sqrt{T})$ still applies.

The \emph{follow the lazy leader (FLL)} algorithm of \cite{Kalai:05}
is designed for the switching costs setting with full-information
feedback. The analysis of FLL guarantees that the oblivious component
of the player's expected regret (without counting the switching costs),
as well as the expected number of switches, is upper bounded by $\Ocal(\sqrt{T})$,
implying an expected regret of at most $\Ocal(\sqrt{T})$.

The algorithm of \cite{Merhav:02} focuses on the bounded memory
adversary with full-information feedback, referring to this problem as
\emph{loss functions with memory}, and guaranteeing a regret
of $\Ocal(T^{2/3})$. The work of \cite{Arora:12} extends this
result to the bandit feedback case, maintaining the same regret bound.

Learning with bandit feedback and switching costs has mostly been
considered in the economics literature, using a different setting than
ours and with prior knowledge assumptions (see \cite{Jun:04} for an
overview). The setting of stochastic oblivious adversaries (i.e.,
oblivious loss functions sampled i.i.d.\ from a fixed distribution)
was first studied by \cite{Agrawal:88}, where they show that
$\Ocal(\log T)$ switches are sufficient to asymptotically guarantee
logarithmic regret. The paper \cite{Ortner:10} achieves logarithmic regret
nonasymptotically with $\Ocal(\log T)$ switches.

Several other papers discuss online learning against ``adaptive''
adversaries \cite{Auer:02,Dani:06,Munos:10,McMahan:04}, but these
results are not relevant to our work and can be easily
misunderstood. For example, even the Exp3 algorithm of \cite{Auer:02}
has extensions to the ``adaptive'' adversary case, with a regret upper
bound of $\Ocal(\sqrt{T})$. This bound doesn't contradict the
$\Omega(T)$ lower bound for general adaptive adversaries mentioned
earlier, since these papers use the regret defined in
\eqref{eqn:oldRegret} rather than the regret used in our work, defined
in \eqref{eqn:policyRegret}.

Another related body of work lies in the field of competitive analysis
---see \cite{Borodin:98}, which also deals with loss functions
that depend on the player's past actions, and the adversary's memory may
even be unbounded. However, obtaining sublinear regret is generally
impossible in this case. Therefore, competitive analysis studies much
weaker performance metrics such as the competitive ratio, making it
orthogonal to our work.

%

\subsection{Our Contribution}

\begin{table*}
\centerline{
\begin{tabular}{|c||c|c|c|c|c|}
\hline
&oblivious&switching cost&memory of size 1&bounded memory&adaptive\\
\hline
\hline
\multicolumn{6}{|c|}{Full-Information Feedback}\\
\hline
\rule{0pt}{3ex} $\widetilde \Ocal$&$\sqrt{T}$&$\sqrt{T}$&$T^{2/3}$&$T^{2/3}$&$T$\\
\hline
$\Omega$&{$\sqrt{T}$}&{$\sqrt{T}$}&{$\sqrt{T}$}&{$\sqrt{T} ~\rightarrow~ \boldsymbol{T^{2/3}}$}&{$T$}\\
\hline
\hline
\multicolumn{6}{|c|}{Bandit Feedback}\\
\hline
\rule{0pt}{3ex} $\widetilde \Ocal$&$\sqrt{T}$&${T^{2/3}}$&${T^{2/3}}$&${T^{2/3}}$&$T$\\
\hline
$\Omega$&{$\sqrt{T}$}&{$\sqrt{T} ~\rightarrow~ \boldsymbol{T^{2/3}}$}&{$\sqrt{T} ~\rightarrow~ \boldsymbol{T^{2/3}}$}&{$\sqrt{T} ~\rightarrow~ \boldsymbol{T^{2/3}}$}&{$T$}\\
\hline
\end{tabular}
}
\caption{State-of-the-art upper and lower bounds on regret (as a function of $T$) against different adversary types. Our contribution to this table is presented in bold face.}
\label{tbl:bounds}
\end{table*}

In this paper, we make the following contributions  (see Table \ref{tbl:bounds}):
\begin{itemize}
    \item Our main technical contribution is a new lower bound on
      regret that matches the existing upper bounds in several of the
      settings discussed above. Specifically, our lower bound applies
      to the switching costs adversary with bandit feedback and
      to all strictly stronger adversaries.
\vspace{-1mm}
    \item Building on this lower bound, we prove another
      regret lower bound in the bounded memory setting with
      \emph{full-information} feedback, again matching the known upper bound.
\vspace{-1mm}
    \item We confirm that existing upper bounds on regret hold in our
      setting and match the lower bounds up to logarithmic factors.
\vspace{-1mm}
    \item Despite the lower bound, we show that for switching costs
      and bandit feedback, if we also assume \emph{stochastic
        i.i.d.\ losses}, then one can get a distribution-free regret
      bound of $\Ocal(\sqrt{T\log\log\log T})$ for finite action sets,
      with only $\mathcal{O}(\log\log T)$ switches. This result uses
      ideas from \cite{CesaBianchi:13}, and is deferred to
      \appref{app:stochastic}.
\end{itemize}
Our new lower bound is a significant step towards a complete
understanding of adaptive adversaries; observe that the upper and
lower bounds in Table \ref{tbl:bounds} essentially match in all but
one of the settings.

Our results have two important consequences. First, observe that the
optimal regret against the switching costs adversary is
$\Theta\bigl(\sqrt{T}\bigr)$ with full-information feedback, versus
$\Theta\bigl(T^{2/3}\bigr)$ with bandit feedback. To the best of our knowledge,
this is the first theoretical confirmation that learning with bandit
feedback is strictly harder than learning with full-information, even
on a small finite action set and even in terms of the dependence on
$T$ (previous gaps we are aware of were either in terms of the number
of actions \cite{Auer:02}, or required large or continuous action
spaces ---see, e.g., \cite{BuMuStoSz:11,Shamir:12}).  Moreover,
recall the regret bound of $\Ocal\bigl(\sqrt{T\log\log\log T}\bigr)$
against the stochastic i.i.d.\ adversary with switching costs and
bandit feedback. This demonstrates that dependencies in the loss
process must play a crucial role in controlling the power of the
switching costs adversary. Indeed, the $\Omega\bigl(T^{2/3}\bigr)$
lower bound proven in the next section heavily relies on such
dependencies.

Second, observe that in the full-information feedback case, the
optimal regret against a switching costs adversary is
$\Theta(\sqrt{T})$, whereas the optimal regret against the more
general bounded memory adversary is $\Omega(T^{2/3})$. This is
somewhat surprising given the ideas presented in \cite{Merhav:02} and
later extended in \cite{Arora:12}: The main technique used in these
papers is to take an algorithm originally designed for oblivious
adversaries, forcefully prevent it from switching actions very often,
and obtain a new algorithm that guarantees a regret of
$\Ocal(T^{2/3})$ against bounded memory adversaries. This would seem
to imply that a small number of switches is the key to dealing with
general bounded memory adversaries. Our result contradicts this
intuition by showing that controlling the number of switches is easier
then dealing with a general bounded memory adversary.

As noted above, our lower bounds require us to slightly weaken the standard
technical assumption that loss values lie in a fixed interval
$[0,C]$. We replace it with the following two assumptions:
\begin{enumerate}
\item \emph{Bounded range}. We assume that the loss values \emph{on each
  individual round} are bounded in an interval of constant size $C$, but we allow this interval to
  drift from round to round. Formally, $\forall t$, $\forall x_{1:t} \in \Acal^{t}$ and $\forall x'_{1:t} \in \Acal^{t}$,
\begin{equation}
\label{eqn:boundedRange}
\big| f_t(x_{1:t}) - f_t(x'_{1:t}) \big| ~\leq~ C~~.
\end{equation}
\item \emph{Bounded drift}. We also assume that the drift of each
  individual action from round to round is contained in a bounded
  interval of size $D_t$, where $D_t$ may grow slowly, as $\Ocal\bigl(\sqrt{\log(t)}\bigr)$. Formally, $\forall t$ and
  $\forall x_{1:t} \in \Acal^t$,
\begin{equation}
\label{eqn:boundedDrift}
\big| f_t(x_{1:t}) - f_{t+1}(x_{1:t}, x_t) \big| ~\leq~ D_t~~.
\end{equation}
\end{enumerate}
Since these assumptions are a relaxation of the standard assumption,
all of the known lower bounds on regret automatically extend to our
relaxed setting. For our results to be consistent with the current
state of the art, we must also prove that all of the known upper
bounds continue to hold after the relaxation, up to logarithmic
factors.

\section{Lower Bounds}

In this section, we prove lower bounds on the player's expected regret
in various settings.

\subsection{$\Omega(T^{2/3})$ with Switching Costs and Bandit Feedback}

We begin with a $\Omega(T^{2/3})$ regret lower bound against an
oblivious adversary with switching costs, when the player receives
bandit feedback. It is enough to consider a very simple setting, with
only two actions, labeled $1$ and $2$. Using the notation
introduced earlier, we use $\ell_1,\ell_2,\dots$ to denote the
oblivious sequence of loss functions chosen by the adversary before
adding the switching cost.
\begin{theorem}\label{thm:lowerbound}
For any
player strategy that relies on bandit feedback
and for any number of rounds
$T$, there exist loss functions $f_1,\dots,f_T$ that are oblivious
with switching costs, with a range
bounded by $C=2$, and a drift bounded by
$D_t = \sqrt{3\log(t)+16}$, such that $\E[R_T] \geq
\frac{1}{40}T^{2/3}$.
\end{theorem}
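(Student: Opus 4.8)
The plan is to exhibit a \emph{randomized} oblivious-with-switching-costs adversary and to lower bound the player's expected regret \emph{averaged over the adversary's randomness}; since a fixed realization attaining at least the average yields a deterministic loss sequence, this proves the theorem. The adversary draws a ``good action'' $\chi\in\{1,2\}$ uniformly at random and an i.i.d.\ sequence of increments $\xi_1,\xi_2,\dots$, each a zero-mean Gaussian of constant variance truncated to $[-D_t,D_t]$ (note $D_t=\sqrt{3\log t+16}$ is always several standard deviations, so truncation is a mild perturbation and the tail event that any $\xi_t$ would leave the interval has negligible mass). With $W_t=\sum_{s\le t}\xi_s$, define the oblivious losses $\ell_t(i)=W_t+\epsilon\,\Ind{i\neq\chi}$ for a bias $\epsilon$ to be tuned, and then $f_t(x_{1:t})=\ell_t(x_t)+\Ind{x_t\neq x_{t-1}}$ as in \eqref{eqn:switchingDef}. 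Directly, $|f_t(x_{1:t})-f_t(x'_{1:t})|\le\epsilon+1\le 2=C$ and $|f_t(x_{1:t})-f_{t+1}(x_{1:t},x_t)|=|\xi_{t+1}|\le D_t$, so the range and drift constraints hold surely. Writing $N=\#\{t\le T:X_t\neq\chi\}$ and $S_T=\#\{t\ge 2:X_t\neq X_{t-1}\}$, and noting that $\min_x\sum_t f_t(x\dots x)=\sum_t\ell_t(\chi)=\sum_t W_t$ since $\epsilon>0$, the regret decomposes cleanly as $R_T=\epsilon N+S_T$.

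The heart of the argument is an information-theoretic lemma formalizing that the random-walk component masks the bias: the only observations carrying information about $\chi$ are those on rounds where the player switches arms. While the player stays on a fixed arm, its observations are $W_t$ plus a constant offset ($0$ or $\epsilon$), whose increments $\xi_{t+1}$ are independent of $\chi$; only the \emph{change} of offset at a switch, which equals $\pm\epsilon$ with sign determined by $\chi$, is informative, and it is observed buried in one fresh noise sample of variance $\Theta(1)$. Formalizing via the chain rule for relative entropy, I would bound $\KL(P_1\|P_2)\le c\,(1+\E[S_T])\,\epsilon^2$, where $P_i$ is the law of the player's transcript when $\chi=i$: conditioned on the past transcript, the law of the next observation depends on $\chi$ only through whether the player switches on that round, so the per-round conditional KL term vanishes on non-switch rounds and is $O(\epsilon^2)$ otherwise. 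Pinsker's inequality then gives $\|P_1-P_2\|_{\mathrm{TV}}\le\sqrt{\tfrac12 c\,(1+\E[S_T])}\,\epsilon$, and a standard symmetrization over $\chi$ (using that $\{X_t=2\}$ is transcript-measurable) yields $\E[N]\ge\tfrac T2\bigl(1-\|P_1-P_2\|_{\mathrm{TV}}\bigr)$.

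To conclude, split into cases. If $\E[S_T]\ge\tfrac1{40}T^{2/3}$, then $\E[R_T]\ge\E[S_T]\ge\tfrac1{40}T^{2/3}$. Otherwise $\E[S_T]<\tfrac1{40}T^{2/3}$, and choosing $\epsilon$ of order $T^{-1/3}$ (concretely $\epsilon=\tfrac1{10}T^{-1/3}$) makes $(1+\E[S_T])\epsilon^2=O(T^{2/3}\cdot T^{-2/3})$ with a small enough constant that $\|P_1-P_2\|_{\mathrm{TV}}\le\tfrac12$, whence $\E[R_T]\ge\epsilon\,\E[N]\ge\tfrac{\epsilon T}{4}\ge\tfrac1{40}T^{2/3}$. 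In either case the averaged regret is at least $\tfrac1{40}T^{2/3}$, so some realization of $(\chi,(\xi_t))$ gives deterministic oblivious-with-switching-costs losses with $\E[R_T]\ge\tfrac1{40}T^{2/3}$.

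\textbf{Main obstacle.} The delicate step is the information-theoretic lemma — in particular, making the ``information enters only at switches'' statement rigorous when the switch times are chosen \emph{adaptively} by the player (so the number and placement of switches are themselves $\chi$-dependent), and controlling the effect of the Gaussian truncation on the per-switch KL estimate. Expanding the KL round by round handles the adaptivity; pinning the constant down to exactly $\tfrac1{40}$ and covering small $T$ is then careful bookkeeping. If the naive single random walk turns out to leak too much to a clever adaptive player, the natural fallback is to replace it by a multi-scale random-walk construction so that no adaptive choice of switch times can cheaply localize $\chi$.
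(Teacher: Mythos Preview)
Your overall architecture---a Gaussian random walk with a hidden $\pm\epsilon$ offset, the observation that information about $\chi$ enters only at switch rounds, a KL/Pinsker bound of order $\epsilon^2$ times the expected number of switches, and a trade-off between switch cost and identification cost---is exactly the paper's approach. The case split you use is a mild variant of the paper's quadratic minimization in $\sqrt{\E[S_T]}$; either works.

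There is, however, a genuine gap in how you handle the bounded-drift requirement. You truncate each $\xi_t$ to $[-D_t,D_t]$ so that drift holds \emph{surely}, intending to then simply pick any realization achieving at least the average regret. But truncation destroys the KL step: on a switch round the conditional law of the next observation under $\chi=1$ versus $\chi=2$ is $\xi_t+\epsilon$ versus $\xi_t-\epsilon$, and for a truncated Gaussian these have \emph{different supports}, so $\KL(P_1\|P_2)=\infty$ and the chain-rule bound $\KL\le c(1+\E[S_T])\epsilon^2$ fails outright. The remark that ``truncation is a mild perturbation'' does not rescue this: KL is not continuous under perturbations that alter the support, no matter how little mass sits near the boundary. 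Replacing KL by Hellinger or by a direct TV chain does not recover the needed $\sqrt{\E[S_T]}\,\epsilon$ rate either, because the boundary mismatch contributes a term of order $\epsilon$ (not $\epsilon^2$) per switch.

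The paper resolves this by \emph{not} truncating. With honest Gaussians the per-switch conditional KL is exactly $2\epsilon^2$ and the argument goes through cleanly to give $\E[R_T]\ge\tfrac{1}{10}T^{2/3}$ over the adversary's randomness. But then one cannot simply ``pick a realization achieving the average,'' because such a realization need not have bounded drift. Instead the paper proves two probability statements and intersects them: (i) with probability at least $1/40$ over the adversary's randomness, $\E_{\text{player}}[R_T]\ge\tfrac{1}{40}T^{2/3}$ (this requires an auxiliary argument---an alternative player that freezes after $T^{2/3}$ switches---to cap $R_T$ so that the expectation bound converts to a probability bound via Markov-type reasoning); and (ii) with probability at least $1-1/80$, every $|\xi_t|\le\sqrt{3\log(t)+16}$. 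Since $1/40+ (1-1/80)>1$, some realization enjoys both properties, and that realization is the deterministic adversary witnessing the theorem. Your proposal is missing precisely this two-event intersection step; the truncation shortcut that was supposed to replace it does not work.
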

The full proof is given in \appref{app:lowerBound}, and here we give an informal proof sketch.  We
begin by constructing a \emph{randomized} adversarial strategy, where
the loss functions $\ell_1,\ldots,\ell_T$ are an instantiation of
random variables $L_t,\ldots,L_T$ defined as follows. Let
$\xi_1,\dots,\xi_T$ be i.i.d.\ standard Gaussian random variables (with
zero mean and unit variance) and let $Z$ be a random variable that
equals $-1$ or $1$ with equal probability. Using these random
variables, define for all $t = 1\ldots T$
\begin{align}
L_t(1) &~=~ \sum_{s=1}^t \xi_s ~~,\nonumber\\
L_t(2) &~=~ L_t(1) + Z T^{-1/3}~~.
\label{eqn:lossstrategy}
\end{align}
In words, $\{L_t(1)\}_{t=1}^T$ is simply a Gaussian random walk and
$\{L_t(2)\}_{t=1}^T$ is the same random walk, slightly shifted up or
down ---see figure \ref{fig:randomwalk} for an illustration.
It is straightforward to confirm that
this loss sequence has a bounded range, as required by the theorem: by
construction we have $|\ell_t(1) - \ell_t(2)| = T^{-1/3} \leq 1$ for all $t$,
and since the switching cost can add at most $1$ to the loss on each
round, we conclude that $|f_t(1) - f_t(2)| \leq 2$ for all $t$.
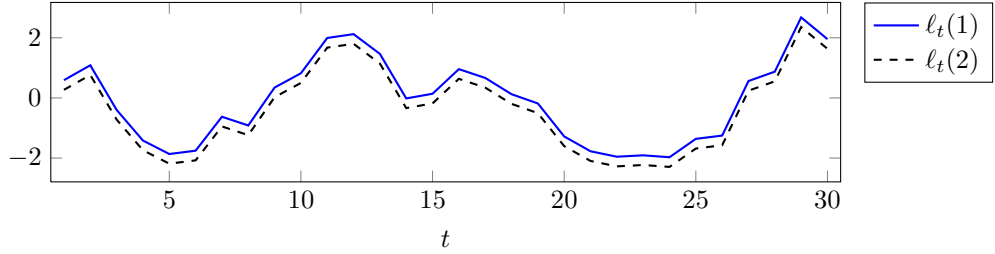
\begin{figure}
\centering
\begin{tikzpicture}
    \begin{axis}[
        xlabel=$t$,xmin=0.5, xmax=30.5, x=0.35cm, y=0.4cm, legend pos = outer north east]
    \addplot[blue,thick]
        plot coordinates {
(1,0.591478)
(2,1.085765)
(3,-0.397356)
(4,-1.417620)
(5,-1.864615)
(6,-1.754956)
(7,-0.626220)
(8,-0.916183)
(9,0.345368)
(10,0.820793)
(11,1.994909)
(12,2.121856)
(13,1.465040)
(14,-0.016359)
(15,0.139130)
(16,0.957682)
(17,0.665094)
(18,0.124307)
(19,-0.184335)
(20,-1.280928)
(21,-1.773938)
(22,-1.954677)
(23,-1.908836)
(24,-1.972619)
(25,-1.361284)
(26,-1.251966)
(27,0.562049)
(28,0.874073)
(29,2.678567)
(30,1.955445)
        };
    \addplot[black,dashed,thick] plot coordinates {
(1,0.269649)
(2,0.763936)
(3,-0.719185)
(4,-1.739450)
(5,-2.186445)
(6,-2.076786)
(7,-0.948050)
(8,-1.238013)
(9,0.023538)
(10,0.498963)
(11,1.673080)
(12,1.800027)
(13,1.143211)
(14,-0.338188)
(15,-0.182699)
(16,0.635852)
(17,0.343264)
(18,-0.197523)
(19,-0.506164)
(20,-1.602758)
(21,-2.095767)
(22,-2.276507)
(23,-2.230666)
(24,-2.294449)
(25,-1.683114)
(26,-1.573796)
(27,0.240219)
(28,0.552243)
(29,2.356737)
(30,1.633616)
    };
    \addlegendentry{$\ell_t(1)$}
    \addlegendentry{$\ell_t(2)$}
    \end{axis}
    \end{tikzpicture}
\caption{A particular realization of the random loss sequence
  defined in \eqref{eqn:lossstrategy}. The sequence of losses for action $1$
  follows a Gaussian random walk, whereas the sequence of losses for
  action $2$ follows the same random walk, but slightly shifted either
  up or down.}
\label{fig:randomwalk}
\end{figure}
Next, we show that the expected regret of any player against this
random loss sequence is $\Omega(T^{2/3})$, where expectation is taken
over the randomization of both the adversary and the player.  The
intuition is that the player can only gain information about which
action is better by switching between them. Otherwise, if he stays on
the same action, he only observes a random walk, and gets no further
information. Since the gap between the two losses on each round is
$T^{-1/3}$, the player must perform $\Omega(T^{2/3})$ switches before
he can identify the better action.  If the player performs that many
switches, the total regret incurred due to the switching costs is
$\Omega(T^{2/3})$. Alternatively, if the player performs $o(T^{2/3})$
switches, he can't identify the better action; as a result he suffers
an expected regret of $\Omega(T^{-1/3})$ on each round and a total
regret of $\Omega(T^{2/3})$.

Since the randomized loss sequence defined in
\eqref{eqn:lossstrategy}, plus a switching cost, achieves an expected
regret of $\Omega(T^{2/3})$, there must exist at least one
\emph{deterministic} loss sequence $\ell_1\ldots \ell_T$ with a regret
of $\Omega(T^{2/3})$. In our proof, we show that there exists such $\ell_1\ldots\ell_T$ with bounded drift.

\subsection{$\Omega(T^{2/3})$ with Bounded Memory and Full-Information Feedback}
We build on \thmref{thm:lowerbound} and prove a $\Omega(T^{2/3})$
regret lower bound in the \emph{full-information setting}, where we
get to see the entire loss vector on every round. To get this strong
result, we need to give the adversary a little bit of extra power:
memory of size $2$ instead of size $1$ as in the case of switching
costs. To show this result, we again consider a simple setting with two actions.

\begin{theorem}\label{thm:lowerboundfullinf}
For any
player strategy that relies on full-information feedback
and for any number of rounds
$T\geq 2$, there exist loss functions $f_1,\dots,f_T$, each with
a memory of size $m=2$, a range
bounded by $C=2$, and a drift bounded by
$D_t = \sqrt{3\log(t)+18}$, such that $\E[R_T] \geq \frac{1}{40}(T-1)^{2/3}$.
\end{theorem}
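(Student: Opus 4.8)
The plan is to reduce the full-information bounded-memory problem to the bandit switching-cost problem of \thmref{thm:lowerbound}. The key idea is to use one of the two actions as a ``signaling'' device: with memory of size $2$, the adversary can reveal information about the realized loss of the action the player chose on the previous round, thereby simulating bandit feedback even though the player formally sees the entire loss vector. Concretely, I would take the randomized loss sequence $L_t(1), L_t(2)$ from \eqref{eqn:lossstrategy} on a two-action set $\{1,2\}$, and build a new three-action instance (actions $1$, $2$, and a ``dummy'' action $3$) on $T-1$ rounds, or alternatively keep two actions but encode the needed information into the loss of the \emph{other} action as a function of $x_{t-1}$. The latter is cleaner: define $g_t(x_{t-1},x_t)$ so that $g_t(x_{t-1},x_t) = L_t(x_t) + \Ind{x_t\neq x_{t-1}}$ when we want to mimic switching costs, but crucially make the coordinate $g_t(x_{t-1}, \cdot)$ for the action \emph{not} equal to $x_{t-1}$ carry no usable information about $Z$ — e.g.\ fix it to the random-walk value $L_t(1)$ regardless. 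Then a full-information player learns exactly what a bandit player in the switching-cost game would learn: the value $L_t(X_t)$ plus switching cost, and nothing about $Z$ unless $X_{t-1}\neq X_t$.

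First I would set up the memory-$2$ loss functions precisely and verify the three structural constraints: memory of size $m=2$ (immediate from the construction, since $f_t$ depends only on $x_{t-1}, x_t$ and the preset random-walk increments), bounded range $C=2$ (the loss gap on any round is at most the $T^{-1/3}\le 1$ gap between $L_t(1)$ and $L_t(2)$ plus at most $1$ from the switching indicator), and bounded drift $D_t = \sqrt{3\log t + 18}$ (this follows from the Gaussian-tail control of $|\xi_{t+1}|$, the single-step increment of the random walk, exactly as in the proof of \thmref{thm:lowerbound}, with a slightly larger additive constant to absorb the extra $\pm T^{-1/3}$ term coming from the memory encoding). Second, I would argue the information-theoretic core: because the full-information player's observation on round $t$ is a deterministic function of $X_{1:t}$, the random walk $\{L_s(1)\}$, and — only when $X_{t-1}\neq X_t$ — the sign $Z$, the player's distribution over action sequences conditioned on $Z=+1$ versus $Z=-1$ can only diverge through rounds on which a switch occurs. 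This is the same KL/information argument underlying \thmref{thm:lowerbound}, so I would either invoke that proof as a black box via a formal reduction, or re-run the switching-counting argument: few switches $\Rightarrow$ cannot identify the good action $\Rightarrow$ $\Omega(T^{-1/3})$ per-round regret; many switches $\Rightarrow$ $\Omega(T^{2/3})$ switching-cost regret. Either way we get $\E[R_T] \ge \frac{1}{40}(T-1)^{2/3}$ over the randomized construction, hence a deterministic sequence achieving it exists, and — as in \thmref{thm:lowerbound} — one can select it to also satisfy the drift bound.

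The main obstacle, and the step that needs genuine care, is making the reduction \emph{exactly} faithful: I must ensure that the memory-$2$ full-information adversary leaks \emph{no more} information than bandit-with-switching-costs, while still \emph{penalizing switches} by a full unit and still making the two actions genuinely differ by $T^{-1/3}$ in cumulative loss. These pull in opposite directions — to penalize a switch $X_{t-1}\neq X_t$ the loss $f_t(X_{t-1},X_t)$ must depend on whether a switch happened, but if I am not careful the player could infer $Z$ from the full loss vector by comparing the two coordinates. The fix is to have the ``off'' coordinate (the loss of the action other than $X_{t-1}$) be defined using only $L_t(1)$ and the deterministic switch penalty, never $Z$; then the only round-$t$ coordinate that depends on $Z$ is the one indexed by $X_t$ when $X_t\neq X_{t-1}$, which is precisely the bandit-feedback information. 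Verifying that this encoding still respects memory $2$ (not $3$) and the $C=2$ range bound, and that the extra round of slack — why the bound is $(T-1)^{2/3}$ rather than $T^{2/3}$ — comes from needing one ``free'' initial round to set up the previous action, is the remaining bookkeeping. Once the reduction is airtight, the regret lower bound is inherited from \thmref{thm:lowerbound} essentially verbatim.
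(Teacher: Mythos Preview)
Your proposal has a genuine gap at the core of the reduction: it misreads what full-information feedback reveals. At round $t$ the player observes the entire vector $\bigl(f_t(X_{1:t-1},x)\bigr)_{x\in\Acal}$, and this observation is fixed once $X_{1:t-1}$ is fixed --- it does \emph{not} depend on the player's choice of $X_t$. So there is no such thing as ``the coordinate indexed by $X_t$'' being the only one revealed; both coordinates are revealed. In your construction $g_t(x_{t-1},x_t)=L_t(x_t)+\Ind{x_t\neq x_{t-1}}$ with the ``off'' coordinate pinned to $L_t(1)+1$, consider $X_{t-1}=2$: the player sees $g_t(2,1)=L_t(1)+1$ and $g_t(2,2)=L_t(2)=L_t(1)+Z\epsilon$, and subtracting immediately yields $Z\epsilon-1$, hence $Z$, with no switch at all. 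If instead you also pin $g_t(2,2)$ to something independent of $Z$, then $\sum_t g_t(2,2)$ no longer differs from $\sum_t g_t(1,1)$ by $\Theta(T\epsilon)$, and the regret comparison collapses. There is no way to make a memory-$1$ loss both (i) distinguish the two constant actions by a $Z$-dependent gap and (ii) hide $Z$ from a full-information player, because the full loss vector at any round is a deterministic function of the adversary's randomness and $X_{t-1}$, and the player sees all of it.

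The paper's proof sidesteps this by making the loss \emph{identical across current actions}: it sets $f_t(x_{t-2},x_{t-1},x_t)=\ell_{t-1}(x_{t-1})+\Ind{x_{t-1}\neq x_{t-2}}$, which does not depend on $x_t$ at all. Then the full-information vector at round $t$ is the single number $\ell_{t-1}(X_{t-1})+\Ind{X_{t-1}\neq X_{t-2}}$, which is exactly the bandit switching-cost feedback at round $t-1$. The dependence on $x_{t-2}$ (for the switch indicator) is precisely why memory $m=2$ is needed, and the one-round shift is why the bound reads $(T-1)^{2/3}$. This ``same loss for both current actions'' trick is the missing idea in your plan; once you have it, the reduction to \thmref{thm:lowerbound} is immediate and all the bookkeeping (range, drift, deterministic realization) goes through as before.
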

The formal proof is deferred to \appref{proof:lowerboundfullinf}
and
a proof sketch is given here.  The proof is based on a
reduction from full-information to bandit feedback that might be of
independent interest. We construct the adversarial loss sequence as
follows: on each round, the adversary assigns the \emph{same} loss to
both actions. Namely, the value of the loss depends only on the
player's previous two actions, and not on his action on the current
round. Recall that even in the full-information version of the game,
the player doesn't know what the losses would have been had he chosen
different actions in the past. Therefore, we have made the
full-information game as difficult as the bandit game.

Specifically, we construct an oblivious loss sequence $\ell_1\ldots
\ell_T$ as in \thmref{thm:lowerbound} and define
\begin{equation}
f_t(x_{1:t}) ~=~ \ell_{t-1}(x_{t-1}) + \Ind{x_{t-1} \neq x_{t-2}}~~.
\label{eqn:reductionFull}
\end{equation}
In words, we define the loss on round $t$ of the full-information game
to be equal to the loss on round $t-1$ of a
bandits-with-switching-costs game in which the player chooses the same
sequence of actions. This can be done with a memory of size $2$, since
the loss in \eqref{eqn:reductionFull} is fully specified by the
player's choices on rounds $t,t-1,t-2$. Therefore, the
$\Omega(T^{2/3})$ lower bound for switching costs and bandit feedback
extends to the full-information setting with a memory of size
at least $2$.

\section{Upper Bounds}
In this section, we show that the known upper bounds on regret, originally proved for bounded losses, can be extended to the case
of losses with bounded range and bounded drift. Specifically,
of the upper bounds that appear in Table \ref{tbl:bounds}, we
prove the following:
\begin{list}{\labelitemi}{\leftmargin=2em}
\item $\Ocal(\sqrt{T})$ for an oblivious adversary with
  switching costs, with full-information feedback.
\item $\widetilde{\Ocal}(\sqrt{T})$ for an oblivious adversary
  with bandit feedback (where $\widetilde{\Ocal}$ hides
  logarithmic factors).
\item $\widetilde{\Ocal}(T^{2/3})$ for a bounded memory
  adversary with bandit feedback.
\end{list}
The remaining upper bounds in Table \ref{tbl:bounds}
are either trivial or follow from the principle that an upper bound
still holds if we weaken the adversary or provide a more informative feedback.


\subsection{$\Ocal(\sqrt{T})$ with Switching Costs and Full-Information Feedback}

In this setting, $f_t(x_{1:t}) = \ell_t(x_t) + \Ind{x_t \neq
  x_{t-1}}$.  If the oblivious losses $\ell_1 \ldots \ell_T$ (without
the additional switching costs) were all bounded in $[0,1]$, the
\emph{Follow the Lazy Leader} (FLL) algorithm of \cite{Kalai:05}
would guarantee a regret of $\Ocal(\sqrt{T})$ with respect to these
losses (again, without the additional switching costs). Additionally,
FLL guarantees that its expected number of switches is
$\Ocal(\sqrt{T})$. We use a simple reduction to extend these
guarantees to loss functions with a range bounded in an interval of
size $C$ and with an arbitrary drift.

On round $t$, after choosing an action and receiving the loss function
$\ell_t$, the player defines the modified loss $\ell'_t(x) =
\frac{1}{C-1} \bigl( \ell_t(x) - \min_{y} \ell_t(y) \bigr)$ and feeds
it to the FLL algorithm. The FLL algorithm then chooses the next
action.
%
\begin{theorem} \label{thm:fullUpBound}
If each of the loss functions $f_1,f_2, \ldots$ is oblivious with
switching costs and has a range bounded by $C$ then the player
strategy described above attains $\Ocal(C\sqrt{T})$ expected regret.
\end{theorem}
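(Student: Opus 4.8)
The plan is to treat the construction purely as a black-box reduction to the Follow-the-Lazy-Leader (FLL) guarantees quoted above, and to carefully track the effect of the two affine modifications made to the losses: the per-round shift-and-rescale $\ell_t \mapsto \ell'_t$, and the addition of the switching-cost indicator. Concretely, I would (i) verify that the $\ell'_t$ are legal inputs for FLL, (ii) apply FLL's regret and switch-count bounds to the sequence $\ell'_1,\dots,\ell'_T$, (iii) undo the rescaling, and (iv) reassemble $R_T$ from its oblivious part and its switching-cost part.

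First I would check that $\ell'_t(x)\in[0,1]$ for every $x$ and $t$. In the only nontrivial case $|\Acal|\ge 2$, the indicator $\Ind{x_t\neq x_{t-1}}$ attains both $0$ and $1$ as $x_{t-1}$ ranges over $\Acal$, so $\max_{x_{1:t}} f_t(x_{1:t}) = \max_y \ell_t(y) + 1$ and $\min_{x_{1:t}} f_t(x_{1:t}) = \min_y \ell_t(y)$; hence the hypothesis $|f_t(x_{1:t}) - f_t(x'_{1:t})|\le C$ forces $\max_y \ell_t(y) - \min_y \ell_t(y) \le C-1$, and therefore $\ell'_t(x) = \frac{1}{C-1}\bigl(\ell_t(x) - \min_y \ell_t(y)\bigr)\in[0,1]$. (On $t=1$ there is no switching term, so $\ell'_1$ could mildly exceed $1$; this affects a single round and contributes only an additive $\Ocal(1)$, which is harmless. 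The degenerate case $C=1$, where each $\ell_t$ is constant, is trivial. Note that the bounded-drift assumption \eqref{eqn:boundedDrift} is not needed: FLL only requires a per-round bounded oblivious sequence.)

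Next I would invoke the two FLL guarantees, applied to the oblivious sequence $\ell'_1,\dots,\ell'_T$ — oblivious because each $\ell'_t$ is a fixed function of the oblivious $\ell_t$, which the player reconstructs from full-information feedback. Since FLL is randomized and the sequence is oblivious, this yields $\E\bigl[\sum_{t}\ell'_t(X_t)\bigr] - \min_x \sum_t \ell'_t(x) = \Ocal(\sqrt{T})$ and $\E\bigl[\sum_{t\ge 2}\Ind{X_t\neq X_{t-1}}\bigr] = \Ocal(\sqrt{T})$, where $X_{1:T}$ is the action sequence FLL — hence the player — produces. Undoing the rescaling via $\ell_t(x) = (C-1)\ell'_t(x) + \min_y \ell_t(y)$: the shift $\min_y \ell_t(y)$ is added uniformly to every action on round $t$, so it cancels between $\sum_t \ell_t(X_t)$ and $\min_x \sum_t \ell_t(x)$, giving $\E\bigl[\sum_t \ell_t(X_t)\bigr] - \min_x \sum_t \ell_t(x) = (C-1)\,\Ocal(\sqrt{T}) = \Ocal(C\sqrt{T})$. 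Finally, since the comparator plays the constant sequence $x,\dots,x$ it incurs no switches, so $\min_x \sum_t f_t(x\dots x) = \min_x \sum_t \ell_t(x)$, while $\sum_t f_t(X_{1:t}) = \sum_t \ell_t(X_t) + \sum_{t\ge 2}\Ind{X_t\neq X_{t-1}}$; subtracting and taking expectations,
\[
\E[R_T] \;=\; \Bigl(\E\bigl[\textstyle\sum_t \ell_t(X_t)\bigr] - \min_x \textstyle\sum_t \ell_t(x)\Bigr) + \E\bigl[\textstyle\sum_{t\ge 2}\Ind{X_t\neq X_{t-1}}\bigr] \;=\; \Ocal(C\sqrt{T}) + \Ocal(\sqrt{T}) \;=\; \Ocal(C\sqrt{T}).
\]

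There is no single hard step: the statement is essentially bookkeeping around a known algorithm. If anything, the only place that needs genuine care is the first step — pinning down that the underlying oblivious losses have range $C-1$ rather than $C$, so that the normalizer $\frac{1}{C-1}$ is correct and $\ell'_t$ truly lands in $[0,1]$ — together with the two ``regret-preserving'' observations: the additive per-round constant cancels against the comparator, and the switching indicator is exactly the quantity that FLL's switch-count bound controls.
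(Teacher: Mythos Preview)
Your proposal is correct and follows essentially the same route as the paper: verify $\ell'_t\in[0,1]$ from the fact that the oblivious part has range $C-1$, apply FLL's regret and switch-count guarantees to $\ell'_1,\dots,\ell'_T$, undo the affine rescaling (the per-round shift cancels against the comparator), and add the expected number of switches back in. If anything, you are more careful than the paper --- you explicitly handle the $t=1$ edge case and note that the bounded-drift assumption is unused --- but the argument is the same.
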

The formal proof is given in \appref{app:upper} but the proof
technique is quite straightforward.  We first show that each $\ell'_t$
is bounded in $[0,1]$ and therefore the standard regret bound for FLL
holds with respect to the sequence of modified loss functions
$\ell'_1,\ell'_2,\ldots$. Then we show that the guarantees provided
for FLL imply a regret of $\Ocal(\sqrt{T})$ with respect to the
original loss sequence $f_1,f_2,\ldots$.

\subsection{$\widetilde{\Ocal}(\sqrt{T})$ with an Oblivious Adversary
  and Bandit Feedback}

In this setting, $f_t(x_{1:t})$ simply equals $\ell_t(x_t)$.
The reduction described in the previous subsection cannot be used in
the bandit setting, since $\min_x \ell_t(x)$ is unknown to the player,
and a different reduction is needed. The player sets a fixed horizon
$T$ and focuses on controlling his regret at time $T$; he can then use
a standard doubling trick \cite{CesaBianchi:06} to handle
an infinite horizon. The player uses the fact that each $f_t$ has a
range bounded by $C$. Additionally, he defines $D = \max_{t\leq T}
D_t$ and on each round he defines the modified loss
\begin{equation}\label{eqn:ftagUpper}
f'_t(x_{1:t})=
\frac{1}{2(C+D)}\big(\ell_t(x_t)-\ell_{t-1}(x_{t-1})\big)+\frac{1}{2}.
\end{equation}
Note that $f'_t(X_{1:t})$ can be computed by the player using only
bandit feedback.  The player then feeds $f'_t(X_{1:t})$ to an
algorithm that guarantees a $\Ocal(\sqrt{T})$ \emph{standard} regret
(see definition in \eqref{eqn:oldRegret}) against an adaptive
adversary. The Exp3.P algorithm, due to \cite{Auer:02}, is such an
algorithm. The player chooses his actions according to the choices
made by Exp3.P. The following theorem states that this reduction
results in a bandit algorithm that guarantees a regret of $\widetilde
\Ocal(\sqrt{T})$ against oblivious adversaries.
\begin{theorem} \label{thm:banditObliviousUpBound}
If each of the loss functions $f_1 \ldots f_T$ is oblivious with a range bounded by
$C$ and a drift bounded by $D_t = \Ocal\bigl(\sqrt{\log(t)}\bigr)$ then the
player strategy described above attains $\widetilde \Ocal(C\sqrt{T})$
expected regret.
\end{theorem}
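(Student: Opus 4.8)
The plan is to show that the reduction turns the player's policy regret on the original oblivious sequence into exactly the \emph{standard} regret of Exp3.P on the modified sequence $f'_1,\dots,f'_T$, up to the scaling factor $2(C+D)$. Note first that $f'_t(x_{1:t})$ depends only on $x_t$ and $x_{t-1}$, so $\{f'_t\}$ is itself a legitimate adaptive loss sequence with a memory of size $1$; this is precisely why we feed it to Exp3.P, which controls the standard regret of \eqref{eqn:oldRegret} even against adaptive adversaries, rather than to Exp3. For $t=1$ we use the convention $\ell_0 \equiv 0$, which changes the regret by at most an additive constant.

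The first step is to verify that the modified losses the player actually observes lie in $[0,1]$, so that the Exp3.P guarantee applies verbatim. Writing $\ell_t(X_t)-\ell_{t-1}(X_{t-1}) = \bigl(\ell_t(X_t)-\ell_t(X_{t-1})\bigr) + \bigl(\ell_t(X_{t-1})-\ell_{t-1}(X_{t-1})\bigr)$, we bound the first term by $C$ using the bounded range \eqref{eqn:boundedRange} and the second by $D_{t-1}\le D$ using the bounded drift \eqref{eqn:boundedDrift}, so $|\ell_t(X_t)-\ell_{t-1}(X_{t-1})|\le C+D$ and hence $f'_t(X_{1:t})\in[0,1]$. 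The second step is to invoke the Exp3.P bound: running Exp3.P on the observed losses $f'_1(X_{1:1}),\dots,f'_T(X_{1:T})$ with confidence parameter $\delta=1/T$ and converting the resulting high-probability bound into an expectation bound (the cumulative loss being at most $T$), we get $\E\bigl[\sum_{t=1}^T f'_t(X_{1:t}) - \min_{x\in\Acal}\sum_{t=1}^T f'_t(X_{1:t-1},x)\bigr] = \widetilde{\Ocal}(\sqrt{T})$, the dependence on $|\Acal|$ being hidden since $|\Acal|$ is fixed.

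The key step is the cancellation that links the two regrets. For any fixed action $x$ and any round $t\ge 2$,
$$f'_t(X_{1:t}) - f'_t(X_{1:t-1},x) ~=~ \frac{1}{2(C+D)}\Bigl(\bigl(\ell_t(X_t)-\ell_{t-1}(X_{t-1})\bigr) - \bigl(\ell_t(x)-\ell_{t-1}(X_{t-1})\bigr)\Bigr) ~=~ \frac{\ell_t(X_t)-\ell_t(x)}{2(C+D)},$$
because both the $-\ell_{t-1}(X_{t-1})$ term and the additive $\tfrac12$ cancel; the same identity holds at $t=1$ under the convention above. Summing over $t$ and maximizing over $x$, the standard regret of Exp3.P on $f'$ equals $\frac{1}{2(C+D)}\bigl(\sum_{t=1}^T\ell_t(X_t) - \min_x\sum_{t=1}^T\ell_t(x)\bigr)$. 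Since $f$ is oblivious, the quantity in parentheses is exactly $R_T$ (policy regret and standard regret coincide here). Taking expectations and using the previous step, $\E[R_T] = 2(C+D)\,\E\bigl[\sum_{t=1}^T f'_t(X_{1:t}) - \min_x\sum_{t=1}^T f'_t(X_{1:t-1},x)\bigr] = 2(C+D)\cdot\widetilde{\Ocal}(\sqrt T) = \widetilde{\Ocal}(C\sqrt T)$, absorbing $D=\Ocal(\sqrt{\log T})$ into the $\widetilde{\Ocal}$. The fixed-horizon assumption is then removed by the standard doubling trick.

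I do not anticipate a genuine obstacle: the only point requiring care is invoking the Exp3.P bound in exactly the right pathwise / adaptive-adversary form (the reason Exp3.P and not Exp3 is used), together with the bookkeeping of $C$, $D$ and the $t=1$ boundary term. The cancellation itself is immediate once \eqref{eqn:ftagUpper} is written out.
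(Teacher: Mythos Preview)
Your proposal is correct and follows essentially the same route as the paper: verify $f'_t\in[0,1]$ via a range-plus-drift triangle inequality, invoke the standard (adaptive-adversary) regret bound of Exp3.P on the $f'_t$'s, and observe that the $-\ell_{t-1}(X_{t-1})$ terms cancel so that the standard regret on $f'$ equals $\tfrac{1}{2(C+D)}$ times the policy regret on $\ell$. The only cosmetic difference is which way you split the triangle inequality (you pivot through $\ell_t(X_{t-1})$, the paper through $\ell_{t-1}(x_t)$); your added remarks about the $t=1$ convention and the doubling trick are fine bookkeeping.
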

%

The full proof is given in \appref{app:upper}.  In a
nutshell, we show that each $f_t'$ is an adaptive loss function
bounded in $[0,1]$ and therefore the analysis of Exp3.P guarantees a
regret of $\Ocal(\sqrt{T})$ with respect to the loss sequence
$f'_1\ldots f'_T$. Then, we show that this guarantee implies a regret
of $(C+D)\Ocal(\sqrt{T}) = \widetilde \Ocal(C\sqrt{T})$ with respect to
the original loss sequence $f_1\ldots f_T$.

\subsection{$\widetilde{\Ocal}(T^{2/3})$ with Bounded Memory and Bandit Feedback}

Proving an upper bound against an adversary with a memory of size
$m$, with bandit feedback, requires a more delicate reduction. As in
the previous section, we assume a finite horizon $T$ and we let $D =
\max_t D_t$. Let $K = |\Acal|$ be the number of actions available to the player.

Since $f_T(x_{1:t})$ depends only on the last $m+1$ actions in
$x_{1:t}$, we slightly overload our notation and define
$f_t(x_{t-m:t})$ to mean the same as $f_t(x_{1:t})$. To define the
reduction, the player fixes a base action $x_0 \in \Acal$ and for each
$t > m$ he defines the loss function
\[
    \fhat_t(x_{t-m:t}) = \frac{1}{2\bigl(C+(m+1)D\bigr)}\bigl(f_t(x_{t-m:t}) - f_{t-m-1}(x_0 \dots x_0) \bigr) + \frac{1}{2}~.
\]
Next, he divides the $T$ rounds into $J$ consecutive epochs of equal
length, where $J = \Theta(T^{2/3})$. We assume that the epoch length
$T/J$ is at least $2K(m+1)$, which is true when $T$ is sufficiently
large.
At the beginning of each epoch, the player plans his action sequence
for the entire epoch. He uses some of the rounds in the epoch for
exploration and the rest for exploitation. For each action in $\Acal$,
the player chooses an exploration interval of $2(m+1)$ consecutive
rounds within the epoch. These $K$ intervals are chosen randomly, but
they are not allowed to overlap, giving a total of $2K(m+1)$
exploration rounds in the epoch. The details of how these intervals are drawn appears in our analysis,
in \appref{app:upper}. The remaining $T/J - 2K(m+1)$ rounds are used
for exploitation.

The player runs the Hedge algorithm \cite{FS97} in the background,
invoking it only at the beginning of each epoch and using it to choose
one exploitation action that will be played consistently on all of the
exploitation rounds in the epoch.
In the exploration interval for action $x$, the player first plays
$m+1$ rounds of the base action $x_0$ followed by $m+1$ rounds of the
action $x$. Letting $t_x$ denote the first round in this interval, the
player uses the observed losses $f_{t_x+m}(x_0 \dots x_0)$ and
$f_{t_x+2m+1}(x \dots x)$ to compute $\fhat_{t_x+2m+1}(x \dots x)$. In
our analysis, we show that the latter is an unbiased estimate of the
average value of $\fhat_{t}(x \dots x)$ over $t$ in the epoch. At
the end of the epoch, the $K$ estimates are fed as feedback to the Hedge algorithm.

We prove the following regret bound, with the proof deferred to \appref{app:upper}.
\begin{theorem}
\label{thm:banditBoundedUpper}
If each of the loss functions $f_1 \ldots f_T$ is has a memory of size
$m$, a range bounded by $C$, and a drift bounded by $D_t =
\Ocal\bigl(\sqrt{\log(t)}\bigr)$ then the player strategy described above
attains $\widetilde \Ocal(T^{2/3})$ expected regret.
\end{theorem}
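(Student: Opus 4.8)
The plan is to analyze the reduction in three stages: first verify that the rescaled losses $\fhat_t$ take values in $[0,1]$, so the internal Hedge instance inherits its standard regret guarantee; second, show that the exploration scheme produces a (nearly) unbiased estimate of the per-epoch average of $\fhat_t(x\dots x)$ for each action $x$; and third, assemble the pieces, bounding separately the regret contributed by exploitation rounds, by exploration rounds, and by the ``memory transients'' that follow each change of action. For the first stage I would apply the bounded-drift assumption~\eqref{eqn:boundedDrift} $m+1$ times along the constant history $x_0\dots x_0$ to get $|f_{t-m-1}(x_0\dots x_0)-f_t(x_0\dots x_0)|\le\sum_{s=t-m-1}^{t-1}D_s\le(m+1)D$, then combine with~\eqref{eqn:boundedRange}, which gives $|f_t(x_0\dots x_0)-f_t(x_{t-m:t})|\le C$, to conclude $|f_t(x_{t-m:t})-f_{t-m-1}(x_0\dots x_0)|\le C+(m+1)D$ and hence $\fhat_t\in[0,1]$. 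Note that because the adversary is deterministic with memory $m$, each number $f_t(x\dots x)$, and thus $\fhat_t(x\dots x)$, is fixed in advance, not random.

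For the second stage I would spell out the random choice of the $K$ disjoint exploration intervals as follows: split the epoch into $K$ equal sub-blocks, match sub-blocks to actions by a uniformly random permutation, and place each action's interval uniformly at random inside its sub-block. Marginally, the measurement round $t_x+2m+1$ of action $x$ is then uniform over all but $\Ocal(Km)$ rounds of the epoch. During the first $m+1$ rounds of this interval the player plays $x_0$ and during the last $m+1$ he plays $x$, so by the memory-$m$ property the two observed losses equal exactly $f_{t_x+m}(x_0\dots x_0)$ and $f_{t_x+2m+1}(x\dots x)$; hence $\E[\fhat_{t_x+2m+1}(x\dots x)]$ is the average of $\fhat_t(x\dots x)$ over those rounds, which differs from the true epoch average $\bar F_j(x):=\frac1N\sum_{t\in E_j}\fhat_t(x\dots x)$ (with $N=T/J$ the epoch length and $E_j$ epoch $j$) by at most $\Ocal(Km/N)$. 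Crucially this estimate is independent of the exploitation action $X_j$ chosen at the start of epoch $j$.

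For the third stage, Hedge over the $J$ epochs with loss vectors $\hat F_j\in[0,1]^K$ guarantees $\sum_j\langle p_j,\hat F_j\rangle-\min_x\sum_j\hat F_j(x)\le\sqrt{(J/2)\log K}$ deterministically, where $p_j$ is Hedge's distribution. Taking expectations and using the near-unbiasedness and the independence from the second stage, together with $\E[\min_x(\cdot)]\le\min_x\E[\cdot]$, I would obtain $\E[\sum_j\bar F_j(X_j)]-\min_x\sum_j\bar F_j(x)\le\sqrt{(J/2)\log K}+\Ocal(KmJ/N)$. Multiplying by $N$ turns per-epoch averages into sums over exploitation rounds, and because the affine map $\fhat_t\mapsto f_t$ uses the same shift $f_{t-m-1}(x_0\dots x_0)$ and scale $2(C+(m+1)D)$ for every action, the shift cancels in the regret, so exploitation rounds contribute at most $2(C+(m+1)D)\bigl(N\sqrt{(J/2)\log K}+\Ocal(KmJ)\bigr)$ to $\E[R_T]$. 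The remaining $2K(m+1)J$ exploration rounds and $\Ocal(mKJ)$ transient rounds (there are $\Ocal(K)$ action changes per epoch, each polluting the next $m$ losses) each add at most $C$ to $R_T$ per round by~\eqref{eqn:boundedRange}, i.e.\ $\Ocal(mKCJ)$ in all. Plugging in $J=\Theta(T^{2/3})$, $N=\Theta(T^{1/3})$ and $D=\Ocal(\sqrt{\log T})$, and treating $C,m,K$ as constants, the dominant term $N\sqrt{J}\cdot 2(C+(m+1)D)=\Ocal(T^{1/3}\cdot T^{1/3}\cdot\sqrt{\log T})=\widetilde\Ocal(T^{2/3})$ gives the bound (the $\Ocal(1)$ boundary rounds at the very start, where $\fhat_t$ is undefined, are absorbed).

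I expect the main obstacle to be the second stage: constructing and analyzing the random disjoint exploration intervals so that, \emph{despite} the non-overlap constraint, each action's measurement round is nearly uniform over the whole epoch. This is what makes the feedback handed to Hedge an (almost) unbiased estimate of the quantity it should be regretting against, and it is the only place where the adversary's memory genuinely interacts with the algorithm's internal randomization; everything else reduces to bookkeeping layered on top of the FLL/Exp3.P-style rescalings already used in the previous subsections.
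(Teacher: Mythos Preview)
Your three-stage plan is correct and tracks the paper's argument closely: the same rescaling to $[0,1]$, the same use of Hedge on per-epoch estimates, and the same final accounting in which the dominant term is $N\sqrt{J}\cdot(C+(m+1)D)=\widetilde\Ocal(T^{2/3})$. The one substantive difference is in how you realize the second stage. You propose splitting each epoch into $K$ equal sub-blocks, permuting actions over sub-blocks, and placing each interval uniformly inside its sub-block; this guarantees disjointness trivially but only gives a marginal that is uniform over $N-\Ocal(Km)$ rounds, so your estimate of $\bar F_j(x)$ carries an $\Ocal(Km/N)$ bias that you then absorb. The paper instead insists on an \emph{exactly} uniform marginal and obtains it by a different device: it arranges the candidate start times $T_j$ on a circle (identifying the last point with the first) and samples uniformly from all well-separated configurations on this circle; the circular symmetry kills the boundary effects that make the naive ``uniform over well-separated tuples on a line'' sampling non-uniform. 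Your scheme is arguably simpler to state and analyze, at the cost of an extra (harmless) bias term; the paper's circle trick is a cleaner combinatorial construction that makes $g_j$ exactly unbiased for $\tfrac{1}{T/J-2m-1}\sum_{t=t_j+2m+1}^{t_{j+1}-1}\fhat_t(x\dots x)$. Either route closes the proof.
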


\section{Discussion}

In this paper, we studied the problem of prediction with expert advice
against different types of adversaries, ranging from the
oblivious adversary to the general adaptive adversary. We proved upper
and lower bounds on the player's regret against each of these
adversary types, in both the full-information and the bandit feedback
models. Our lower bounds essentially matched our upper bounds in all but one case: the adaptive adversary with a unit memory in the full-information setting, where we only know that regret is $\Omega(\sqrt{T})$ and $\Ocal(T^{2/3})$.

Our new bounds have two important consequences. First, we characterize
the regret attainable with switching costs, and show a setting where
predicting with bandit feedback is strictly more difficult than
predicting with full-information feedback ---even in terms of the
dependence on $T$, and even on small finite action sets. Second, in
the full-information setting, we show that predicting against a
switching costs adversary is strictly easier than predicting against an
arbitrary adversary with a bounded memory.

To obtain our results, we had to slightly relax the standard assumption that loss values are bounded in $[0,1]$. Re-introducing this assumption and proving similar lower bounds remains an elusive open problem.
Many other questions remain unanswered. Can we
characterize the dependence of the regret on the size of the action set $\Acal$?
Can we strengthen any of our expected regret bounds to bounds
that hold with high probability? Can any of our results be generalized
to more sophisticated notions of regret, such as shifting regret and
swap regret, as in \cite{Arora:12}?

In addition to the adversary types discussed in this paper, there are
other interesting classes of adversaries that lie between the
oblivious and the adaptive. One of these is the oblivious adversary
with delayed feedback, briefly mentioned in the introduction. While
some results for this adversary exist (see \cite{Mesterharm:05}), the
attainable regret, especially in the bandit feedback case, is not
clear. Another interesting case is the family of
\emph{deterministically adaptive} adversaries, which includes
adversaries that adapt to the player's actions (so they are not
oblivious) in a known deterministic way, rather than in a secret
malicious way. For example, imagine playing a multi-armed bandit game
where the loss values are initially oblivious, but whenever the player
chooses an arm with zero loss, the loss of the same arm on the next
round is deterministically changed to zero. In other words, whenever
the player suffers a zero loss, he knows that choosing the same arm
again guarantees another zero loss. This is an important setting
because many real-world online prediction scenarios are indeed
deterministically adaptive.

\bibliographystyle{plainnat}
\bibliography{bib}


\newpage
\appendix

\noindent {\huge \bf Appendices}
\renewcommand{\argmin}{\mathop{\rm argmin}}
\renewcommand{\argmax}{\mathop{\rm argmax}}
\newcommand{\muhat}{\widehat{\mu}}
\newcommand{\khat}{\widehat{x}}

\section{Distribution-free regret bound for bandits with switching costs}\label{app:stochastic}
In this appendix we adapt results of \cite{CesaBianchi:13} to show a strategy that achieves $\mathcal{O}\bigl(\sqrt{T\log\log\log T}\bigr)$ regret against any i.i.d.\ oblivious adversary in the bandit setting with switching costs, assuming a finite action set $\Acal=\{1\ldots K\}$. The strategy used by this stochastic adversary is specified by a probability distribution over oblivious loss functions. The oblivious loss function for each step $t=1,2,\dots$ is the realization on an independent draw $L_t$ from this distribution. The regret of a player choosing actions $X_0=X_1,X_2,\dots$ is defined by
\[
    R_T ~=~ \sum_{t=1}^T \E_t\bigl[L_t(X_t) + \Ind{X_t \neq X_{t-1}} \bigr] - \min_{x \in \Acal} \sum_{t=1}^T \E\bigl[L_t(x)\bigr]
\]
where the expectation $\E$ is over the random draw of each $L_t$ and the possible randomization of the player, and the expectation $\E_t$ is conditioned over $X_1,L_1(X_1),\dots,X_{t-1},L_{t-1}(X_{t-1})$.

Our result focuses on loss distributions such that the law of each marginal $L_1(x)$ is subgaussian. A random variable $Z$ is subgaussian if there exist constants $b,c$ such that for any $a > 0$
$
    \Pr\bigl(Z > \E\,Z + a\bigr) \le b e^{-ca^2}
$
and
$
    \Pr\bigl(Z < \E\,Z -a\bigr) \le b e^{-ca^2}
$.
One can then show that, for any i.i.d.\ sequence $Z_1,\dots,Z_T$ of subgaussian random variables,
\begin{equation}
\label{eq:azuma}
    \Pr\left( \left|\frac{1}{T}\sum_{t=1}^T Z_t - \E\,Z_1 \right| > \sqrt{\frac{112b}{cT}\ln\frac{1}{\delta}} \right) \le \delta~.
\end{equation}
In the following, we use the notation $\E\bigl[L_t(x)\bigr] = \mu(x)$ and
${\displaystyle
    \mu^* = \min_{x\in\Acal} \mu(x)~.
}$
\begin{theorem}
Consider a finite action set $\Acal=\{1\ldots K\}$. Then for each $T$ there exists a deterministic player strategy for the bandit game with i.i.d.\ oblivious adversaries and switching costs, whose regret after $T$ steps is $\mathcal{O}\bigl(\sqrt{T\log\log\log T}\bigr)$ with high probability, provided the distribution of $L_1(x)$ is sugaussian for each $x \in \Acal$.
\end{theorem}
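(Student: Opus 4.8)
The plan is to adapt the ideas of \cite{CesaBianchi:13}: a deterministic batched action-elimination strategy. The player fixes in advance per-arm sample budgets $q_1 \le q_2 \le \cdots \le q_M$ with $M = \Ocal(\log\log T)$, maintains a set of active arms $\Acal_i \subseteq \Acal$ (with $\Acal_1 = \Acal$), and in \emph{batch} $i$ plays each arm of $\Acal_i$ for $q_i$ consecutive rounds, in a fixed order. The point of fixing $q_i$ beforehand is that the number of observations of any arm that survives through batch $i$ is the \emph{deterministic} number $Q_i = q_1 + \cdots + q_i$ (only the batch length $|\Acal_i|\,q_i$ is random). At the end of batch $i$ the player forms, for each $x \in \Acal_i$, the empirical mean $\muhat_i(x)$ of its $Q_i$ observed losses and a radius $\rho_i = \Theta\bigl(\sqrt{\log(MK/\delta)/Q_i}\bigr)$, and eliminates every $x$ with $\muhat_i(x) - \rho_i > \min_{y\in\Acal_i}\bigl(\muhat_i(y) + \rho_i\bigr)$; once a single arm remains, or for any rounds left after batch $M$, the player commits to the empirical best. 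Because the player is deterministic and $L_t$ is independent of the past, $\E_t\bigl[L_t(X_t) + \Ind{X_t\neq X_{t-1}}\bigr] = \mu(X_t) + \Ind{X_t\neq X_{t-1}}$, so $R_T$ equals the pseudo-regret $\sumt_t\bigl(\mu(X_t) - \mu^*\bigr)$ plus the realized number of switches, and the latter is at most $\sumt_i |\Acal_i| \le MK = \Ocal(\log\log T)$, negligible against $\sqrt{T}$.

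For the pseudo-regret, work on the event $\Gcal$ that $|\muhat_i(x) - \mu(x)| \le \rho_i$ for all $i \le M$ and $x \in \Acal_i$. Since the observed losses of a fixed arm are i.i.d.\ copies of $L_1(x)$ (subgaussian, exactly as \eqref{eq:azuma} requires) and the $Q_i$ form a fixed grid of sizes, \eqref{eq:azuma} plus a union bound over the $\le MK$ pairs $(i,x)$ give $\Pr(\Gcal) \ge 1 - MK\delta$. On $\Gcal$ the standard elimination argument shows that $\argmin_x\mu(x)$ is never removed and that an arm $x$ with gap $\Delta(x) = \mu(x) - \mu^*$ is removed no later than the first batch $i$ with $Q_i$ exceeding $\tau_{\Delta(x)} = \Theta\bigl(\log(MK/\delta)/\Delta(x)^2\bigr)$; hence $x$ contributes at most $\Delta(x)\,Q_{i^*}$ to the pseudo-regret, where $Q_{i^*-1} \le \tau_{\Delta(x)}$. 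The schedule is chosen so that one step never overshoots: with $Q_{i+1} = g(Q_i)$ and $g(Q) = \Theta\bigl(\sqrt{(T/K)\,Q}\bigr)$ we get $Q_{i^*} \le g(\tau_{\Delta(x)})$, whence $\Delta(x)\,Q_{i^*} = \Ocal\bigl(\sqrt{(T/K)\log(MK/\delta)}\bigr)$ \emph{uniformly in} $\Delta(x)$. Summing over the $\le K-1$ suboptimal arms, plus the same-order contribution of the commitment phase (the committed arm has gap $\Ocal(\rho_M)$ over $\le T$ rounds), gives pseudo-regret $\Ocal\bigl(\sqrt{K\,T\log(MK/\delta)}\bigr)$ on $\Gcal$. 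Taking $\delta$ a small constant over $MK$ makes $\log(MK/\delta) = \Ocal(\log M) = \Ocal(\log\log\log T)$ for fixed $K$, and leaves $\Pr(\Gcal)$ at least the complementary constant; this is the claimed $\Ocal\bigl(\sqrt{T\log\log\log T}\bigr)$ bound, with high probability, for a deterministic strategy (no randomized exploration is needed against an oblivious i.i.d.\ adversary, so the within-batch order of arms may be fixed in advance).

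The main obstacle is making the schedule and the counting cooperate. The $\Ocal(\log\log T)$ switch budget forces batches whose cumulative lengths grow essentially like $Q_{i+1} \approx \sqrt{(T/K)\,Q_i}$: this recursion has fixed point $\Theta(T/K)$ and, writing $Q_i = (T/K)^{1-\epsilon_i}$, halves $\epsilon_i$ each step, so $\Ocal(\log\log T)$ batches reach $\Theta(T/K)$ and $\sumt_i |\Acal_i|\,q_i \le KQ_M = \Ocal(T)$ fits in the horizon. The delicate point is that this \emph{same} growth function $g$ must be matched to the elimination threshold so that $\Delta\,g(\tau_\Delta) = \Ocal(\sqrt{T\,\mathrm{polylog}})$ for every gap $\Delta$ simultaneously — too fast a schedule saves switches but ruins the regret of an arm eliminated just after a long batch, too slow a schedule needs too many batches. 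A secondary but genuine technical point is keeping the concentration bookkeeping honest: \eqref{eq:azuma} is an i.i.d.\ statement at a fixed averaging length, so the algorithm must be arranged (hence the pre-fixed budgets $q_i$) so that confidence intervals are only ever invoked at the $\Ocal(MK)$ fixed sample sizes $Q_i$, since a union bound over all of $1,\dots,T$ would reintroduce a $\log T$ factor and destroy the triple logarithm.
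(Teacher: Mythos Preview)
Your approach is essentially the paper's: a deterministic batched action-elimination strategy with $\Ocal(\log\log T)$ stages and the same square-root growth schedule (you write $Q_{i+1}\approx\sqrt{(T/K)\,Q_i}$, the paper writes the equivalent $T_s = T^{1-2^{-s}}$), with subgaussian concentration invoked at the $\Ocal(MK)$ fixed grid points and switches bounded by $K$ times the number of stages. The only differences are bookkeeping --- you fix per-arm budgets $q_i$ and account for regret per arm via $\Delta(x)\,Q_{i^*}\le \Delta(x)\,g(\tau_{\Delta(x)})$, whereas the paper fixes total stage lengths $T_s$, uses per-stage (not cumulative) empirical means, and sums regret stage by stage via $T_s\cdot C_{s-1}\propto T_s/\sqrt{T_{s-1}}=\sqrt{T}$ --- but these are minor variants of the same \cite{CesaBianchi:13} construction.
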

\begin{proof}
Consider the following player that proceeds in stages. At each stage $s=1,2,\dots,S$, the player maintains a set $A_s \subseteq \Acal$ of active actions. Each action is played $T_s/|A_s|$ times in a round-robin fashion, where $T_s = T^{1-2^{-s}}$ is the total number of plays in stage $s$ and $T$ is the known horizon. Note that the overall number of switches is at most $KS$, where
\[
    S = \min\left\{j \in \naturals \,:\, \sum_{s=1}^j T_s \ge T \right\} = \mathcal{O}\bigl(\ln\ln T\bigr)~.
\]
Let $\muhat_s(x)$ the sample mean of losses for action $x$ in stage $s$, and define
\[
    \khat_s = \argmin_{x \in A_s} \muhat_s(x)
\]
the best empirical action in stage $s$. The sets $A_s$ of active actions are defined as follows: $A_1 = \Acal$ and
\[
    A_s = \Bigl\{ x \in A_{i-1} \,:\, \muhat_{s-1}(x) \le \muhat_{s-1}(\khat_{s-1}) + 2C_{s-1} \Bigr\}
\]
where
\[
    C_s = \sqrt{112(b/c)\frac{K}{T_s}\ln\frac{KS}{\delta}}~.
\]
Note that $A_S \subseteq \cdots \subseteq A_1$ by construction. Also, using~(\ref{eq:azuma}) and the union bound we have that
\begin{equation}
\label{eq:uniform}
    \max_{x \in A_s} \bigl| \muhat_s(x) - \mu(x) \bigr| \le C_s
\end{equation}
simultaneously for all $s=1,\dots,S$ with probability at least $1-\delta$.

We claim the following.
\begin{claim}
With probability at least $1-\delta$,
\begin{align*}
    x^* \in \bigcap_{s=1}^S A_s
\qquad\text{and}\qquad
    0 \le \muhat_s(x^*) - \muhat_s(\khat_s) \le 2 C_s \quad \text{for all $s=1,\dots,S$.}
\end{align*}
\end{claim}
\noindent\textit{Proof of Claim.}
We prove the lemma by induction on $s=1,\dots,S$. We first show that the base case $s=1$ holds with probability at least $1-\delta/S$. Then we show that if the claim holds for $s-1$, then it holds for $s$ with probability at least $1-\delta/S$ over all random events in stage $s$. Therefore, using a union bound over $s=1,\dots,S$ we get that the claim holds simultaneously for all $s$ with probability at least $1-\delta$.

For the base case $s=1$ note that $x^* \in A_1$ by definition, and thus $\muhat_1(\khat_1) \le \muhat_1(x^*)$ holds. Moreover, using~(\ref{eq:uniform}) we obtain that
\[
    \muhat_1(x^*) - \mu(x^*) \le C_1
\qquad\text{and}\qquad
    \mu(\khat_1) - \muhat_1(\khat_1) \le C_1
\]
holds with probability at least $1-\delta/S$. Since $\mu(x^*) - \mu(\khat_1) \le 0$ by definition of $x^*$, we obtain
\[
    0
\le
    \muhat_1(x^*) - \muhat_1(\khat_1)
\le
    2C_1
\]
as required. We now prove the claim for $s > 1$ using the inductive assumption
\[
    x^* \in A_{s-1}
\qquad\text{and}\qquad
    0
\le
    \muhat_{s-1}(x^*) - \muhat_{s-1}(\khat_{s-1})
\le
    2C_{s-1}~.
\]
The inductive assumption directly implies that $x^*\in A_s$. Thus we have $\muhat_i(\khat_s) \le \muhat_s(x^*)$, because $\khat_s$ minimizes $\muhat_s$ over a set that contains $x^*$. The rest of the proof of the claim closely follows that of the base case $s=1$. \hfill $\Box$

\medskip\noindent
Now, for any $s=1,\dots,S$ and for any $x \in A_s$ we have that
\begin{align*}
    \mu(x) - \mu(x^*)
&\le
    \muhat_{s-1}(x) - \mu(x^*) + C_{s-1} \qquad \text{by~(\ref{eq:uniform})}
\\&\le
    \muhat_{s-1}(\khat_{s-1}) - \mu(x^*) + 3C_{s-1} \qquad \text{by definition of $A_{s-1}$, since $x \in A_s \subseteq A_{s-1}$}
\\&\le
    \muhat_{s-1}(x^*) - \mu(x^*) + 3C_{s-1} \qquad \text{since $\khat_{s-1}$ minimizes $\muhat_{s-1}$ in $A_{s-1}$}
\\&\le
    4C_{s-1} \qquad \text{by~(\ref{eq:uniform})}
\end{align*}
holds with probability at least $1-\delta/S$. Hence, recalling that
\[
    \sum_{t=1}^T \Ind{X_t \neq X_{t-1}} \le KS
\]
holds deterministically, the regret of the player over the $T$ plays can be bounded as follows
\begin{align*}
    KS + \sum_{t=1}^T \bigl( \mu(X_t) - \mu^* \bigr)
&=
    KS + \sum_{s=1}^S \frac{T_s}{|A_s|} \sum_{x \in A_s} \bigl( \mu(x) - \mu^* \bigr)
\\&=
    KS + \frac{T_1}{K} \sum_{i=1}^K \bigl( \mu(x) - \mu^* \bigr) + \sum_{s=2}^S \frac{T_s}{|A_s|} \sum_{x \in A_s} \bigl( \mu(x) - \mu^* \bigr)
\\&\le
    KS + T_1\mu^* + \sum_{i=2}^S 4T_s\sqrt{112(b/c)\frac{K}{T_s}\ln\frac{KS}{\delta}}
\\&=
    KS + T_1\mu^* + 4\sqrt{112(b/c)K\ln\frac{KS}{\delta}}\sum_{s=2}^S \frac{T_s}{\sqrt{T_{s-1}}}
\end{align*}
Now, since $T_1 = \sqrt{T}$, $T_s/\sqrt{T_{s-1}} = \sqrt{T}$ and $S = \mathcal{O}\bigl(\ln\ln T\bigr)$, we obtain that with probability at least $1-\delta$ the regret is at most of order
\[
K\ln\ln T + \mu^*\sqrt{T} + \sqrt{KT\left(\ln\frac{K}{\delta} + \ln\ln\ln T\right) }
\]
as desired.
\end{proof}

\section{Proof of \thmref{thm:lowerbound}}
\label{app:lowerBound}

As mentioned in the text, we first consider the player's expected
regret against a randomized adversary. Specifically, we define
$$
\forall t~~~L_t(1) ~=~ \sum_{s=1}^t \xi_s ~~~\text{and}~~~
L_t(2) ~=~ L_t(1) + Z \epsilon~~,
$$
where $\xi_1\ldots \xi_T$ are independent standard Gaussians, $Z$
equals $-1$ or $1$ with equal probability, and $\epsilon$ is the gap
between the losses of the two actions (which will later be set to
$\epsilon = T^{-1/3}$).

Next, we assume for now, without loss of generality, that the player is
deterministic.  A deterministic player chooses each action $X_t$ as a
deterministic function of the random losses suffered on the previous
rounds, $L_1(X_1)\dots L_{t-1}(X_{t-1})$.  We can make this assumption
because any randomized player strategy can be seen as a distribution
over deterministic player strategies, and since the randomization used
by the adversary is independent of the player's strategy.

In the results below, $\Pr$ denotes the distribution of the randomized
adversary.  We also introduce the conditional distributions $\Sm =
\Pr(\cdot\mid Z>0)$ (i.e., $1$ is the better action) and $\Qm =
\Pr(\cdot\mid Z<0)$ (i.e., $2$ is the better action). Since $Z$ has an
equal probability of being negative or positive, it holds that $\Pr =
\frac{1}{2}(\Sm + \Qm)$.

We begin with the following technical lemma.
\begin{lemma}\label{lem:1}
Let $\Ind{x_{t-1}\neq x_t}$ indicate whether the player switched actions on round $t$ (and $1$ for $t=1$). Then for any event $A$,
\[
\bigl|\Sm(A)-\Qm(A)\bigr|\leq \epsilon\sqrt{\E\left[\sum_{t=1}^{T}\Ind{X_t\neq X_{t-1}}\right]}
\]
where the expectation in the right-hand side is with respect to $\Pr$.
\end{lemma}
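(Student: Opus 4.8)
The plan is to bound $|\Sm(A)-\Qm(A)|$ by Pinsker's inequality in terms of the KL divergence between the two conditional distributions $\Sm$ and $\Qm$ over the player's observed feedback sequence, and then to show that this KL divergence is controlled by the expected number of switches times $\epsilon^2$. Since the player is assumed deterministic, the only randomness driving the player's trajectory $X_{1:T}$ is the adversary's randomness, and the information the player actually receives is the sequence of observed losses $L_1(X_1),\dots,L_T(X_T)$. So it suffices to bound the KL divergence between the law of this observed sequence under $\Sm$ and under $\Qm$. By Pinsker, $|\Sm(A)-\Qm(A)| \le \sqrt{\tfrac12 \KL(\Sm\|\Qm)}$ for any event $A$ measurable with respect to the observations.

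The key computation is a chain-rule decomposition of $\KL(\Sm\|\Qm)$ over rounds $t=1,\dots,T$. Write the observed loss on round $t$ as $L_t(X_t) = \sum_{s=1}^{t}\xi_s + \Ind{X_t=2}\,Z\epsilon$. Conditioning on the past observations (which, for a deterministic player, determine $X_t$), the increment $L_t(X_t) - L_{t-1}(X_{t-1})$ is Gaussian; its mean differs between $\Sm$ and $\Qm$ exactly when the player switches \emph{into} or \emph{out of} action $2$ on round $t$, i.e.\ when $\Ind{X_t \neq X_{t-1}}$ (the shift $Z\epsilon$ either appears, disappears, or is unchanged). When $X_t = X_{t-1}$ the conditional law of the increment is the same standard Gaussian under both $\Sm$ and $\Qm$, contributing zero to the KL. When $X_t \neq X_{t-1}$, the two conditional laws are Gaussians with the same variance and means differing by $\epsilon$, so the per-round KL contribution is $\epsilon^2/2$. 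Summing via the chain rule gives $\KL(\Sm\|\Qm) \le \tfrac{\epsilon^2}{2}\,\E\!\left[\sum_{t=1}^T \Ind{X_t\neq X_{t-1}}\right]$, where the expectation is under $\Pr$ (or equivalently, since the integrand is nonnegative and $\Pr = \tfrac12(\Sm+\Qm)$, one gets the same bound up to the conventions the authors adopt). Plugging into Pinsker yields the claimed inequality.

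The main obstacle I anticipate is making the chain-rule step fully rigorous: one must carefully set up the filtration generated by the observed losses, argue that for a deterministic player $X_t$ is measurable with respect to $L_{1:t-1}(X_{1:t-1})$ so that ``switch on round $t$'' is a predictable event, and handle the fact that $Z$ is a global latent variable rather than fresh per-round randomness --- so the natural object is $\KL$ between $\Sm$ and $\Qm$ on the observation $\sigma$-algebra, computed increment by increment using the Gaussian conditional densities. One also has to treat the boundary round $t=1$ correctly (hence the convention that the indicator is $1$ there, matching that $L_1(X_1)$ already reveals $Z\epsilon$ if $X_1=2$). The reduction to a deterministic player is already justified in the surrounding text, so I would invoke that directly rather than re-prove it.
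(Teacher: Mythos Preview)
Your approach is the same as the paper's---Pinsker plus the chain rule for relative entropy, reducing to Gaussian-versus-Gaussian KL on switching rounds---but two details in the execution are off, and together they matter for recovering the constant in the lemma.

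First, the mean shift on a switching round is $2\epsilon$, not $\epsilon$. Conditionally on the past, the increment $L_t(X_t)-L_{t-1}(X_{t-1})$ equals $\xi_t \pm Z\epsilon$ when $X_t\neq X_{t-1}$. Under $\Sm$ we have $Z=+1$ and under $\Qm$ we have $Z=-1$, so the conditional means are $\pm\epsilon$ and $\mp\epsilon$ respectively; the gap is $2\epsilon$ and the per-round KL contribution is $\tfrac{(2\epsilon)^2}{2}=2\epsilon^2$, not $\epsilon^2/2$. (Your phrasing ``the shift $Z\epsilon$ either appears, disappears, or is unchanged'' suggests you compared to a baseline of zero shift rather than to the opposite sign.) With $\epsilon^2/2$ you would end up proving $|\Sm(A)-\Qm(A)|\le \tfrac{\epsilon}{2}\sqrt{\E[\cdots]}$, which is stronger than the lemma and not what the computation gives.

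Second, the chain rule for $\KL(\Sm\|\Qm)$ yields an outer expectation under $\Sm$, not under $\Pr$; nonnegativity of the integrand does not let you swap measures. The paper closes this gap by also bounding $\KL(\Qm\|\Sm)$ (which gives the expectation under $\Qm$), applying Pinsker to each, and averaging the two resulting bounds on $|\Sm(A)-\Qm(A)|^2$. Since $\Pr=\tfrac12(\Sm+\Qm)$, the average is exactly $\epsilon^2\,\E_{\Pr}\!\bigl[\sum_t \Ind{X_t\neq X_{t-1}}\bigr]$. With the correct $2\epsilon^2$ per-round contribution and this symmetrization, you land precisely on the stated inequality.
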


\begin{proof}
To show this, we use the chain rule for relative entropy, which implies
\begin{equation}\label{eq:chainn}
\KL\bigl(\Sm ~\big\|~ \Qm\bigr) = \sum_{t=1}^{T} \KL\bigl(\Sm_{t-1} ~\big\|~ \Qm_{t-1} \bigr)
\end{equation}
where $\Sm_{t-1}$ and $\Qm_{t-1}$ denote the distributions of the player's loss $L_t(x_t)$ conditioned on $L_1,\dots,L_{t-1}$, when the joint distribution of $L_1,\dots,L_T$ is, respectively, $\Sm$ and $\Qm$.

Let us focus on a particular term $\KL\bigl(\Sm_{t-1}~\big\|~\Qm_{t-1}\bigr)$ and a particular realization of the random losses $L_1,\dots,L_{t-1}$. Since we assume a deterministic player strategy, for any such realization the player's choices $x_{1:t}$ are all determined, and we deterministically have that the player either switched or not at time $t$. If he did not switch, then $L_t(x_t)$ is distributed as $L_{t-1}(x_{t-1})+\xi_t$ under both measures $\Sm_{t-1}$ and $\Qm_{t-1}$, so the relative entropy between them is zero. If he did switch, then $L_t(x_t)$ is distributed as $L_{t-1}(x_{t-1})-\epsilon+\xi$ under $\Sm_{t-1}$ (where the switch is towards the best action), and as $L_{t-1}(x_{t-1})+\epsilon+\xi$ under $\Qm_{t-1}$ (where the switch is towards the worst action). Hence, the relative entropy is the same as two standard Gaussians whose means are shifted by $2\epsilon$, namely $2\epsilon^2$. So overall, we can upper bound \eqref{eq:chainn} by
\begin{equation}
\label{eq:kl-bound}
    2\epsilon^2\,\E\left[\sum_{t=1}^T \Ind{X_t\neq X_{t-1}} \,\bigg|\, Z > 0 \right]~.
\end{equation}
Using a similar argument, we also show that $\KL\bigl(\Qm ~\big\|~ \Sm\bigr)$ is upper bounded by \eqref{eq:kl-bound} in which the conditioning on $Z > 0$ is replaced by $Z < 0$. Then, Pinsker's inequality implies that $\bigl|\Sm(A)-\Qm(A)\bigr|^2$ is at most
\[
\frac{\epsilon^2}{2}\left(\E\left[\sum_{t=1}^T \Ind{X_t\neq X_{t-1}} \,\bigg|\, Z > 0 \right] + \E\left[\sum_{t=1}^T \Ind{X_t\neq X_{t-1}} \,\bigg|\, Z < 0 \right]\right) =  \epsilon^2\E\left[\sum_{t=1}^T \Ind{X_t\neq X_{t-1}} \right]
\]
which gives the desired bound.
\end{proof}

With this lemma, we can prove a lower bound on the expected regret for randomized adversaries.
\begin{lemma}\label{lem:ll}
By picking $\epsilon=T^{-1/3}$, the expected
regret of any deterministic player strategy, over the randomness of
the adversary, is at least $\frac{1}{10} T^{2/3}$.
\end{lemma}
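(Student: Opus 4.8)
The plan is to turn the regret into an exact algebraic identity and then exploit a tension between two quantities the player controls: how often it plays the suboptimal arm, and how often it switches. First I would record what $R_T$ is under the randomized adversary. The comparator $\min_x \sum_{t=1}^T L_t(x)$ equals $\sum_t L_t(1)$ when $Z>0$ and $\sum_t L_t(1) - T\epsilon$ when $Z<0$, while the player's realized loss is $\sum_t L_t(1) + Z\epsilon\sum_{t=1}^T \Ind{X_t = 2}$ plus the switching costs $\sum_{t=2}^T \Ind{X_t\neq X_{t-1}}$. Subtracting, in \emph{both} cases one obtains the clean identity
\[
R_T ~=~ \epsilon\, W + \sum_{t=2}^T \Ind{X_t\neq X_{t-1}}~,
\]
where $W$ is the number of rounds on which the player plays the \emph{suboptimal} arm (arm $2$ when $Z>0$, arm $1$ when $Z<0$). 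Writing $N$ for the switch count as in \lemref{lem:1}, taking expectations over $\Pr$ gives $\E[R_T] = \epsilon\,\E[W] + \E[N] - 1$, where the $-1$ merely reflects that the $t=1$ term, counted in $N$ by the convention of \lemref{lem:1}, is not actually charged.

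Second, I would lower bound $\E[W]$ using \lemref{lem:1}. Conditioning on the sign of $Z$ and using $\Pr(Z>0)=\Pr(Z<0)=\tfrac12$,
\[
\E[W] ~=~ \tfrac12\Bigl(\sum_{t=1}^T \Sm(X_t = 2) + \sum_{t=1}^T \Qm(X_t = 1)\Bigr) ~=~ \tfrac{T}{2} + \tfrac12\sum_{t=1}^T\bigl(\Sm(X_t = 2) - \Qm(X_t = 2)\bigr)~,
\]
and applying \lemref{lem:1} to each event $A=\{X_t=2\}$ bounds the final sum below by $-T\epsilon\sqrt{\E[N]}$, so $\E[W] \geq \tfrac{T}{2}\bigl(1-\epsilon\sqrt{\E[N]}\bigr)$. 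This is the step that formalizes ``few switches $\Rightarrow$ the player cannot distinguish the two arms $\Rightarrow$ it is wrong on roughly half the rounds,'' and it is the heart of the argument. The main obstacle there — the chain-rule plus Pinsker estimate linking indistinguishability to the number of switches — is already discharged in \lemref{lem:1}, so what remains here is only the conditional-expectation bookkeeping above (and checking that $\{X_t=2\}$ is a legitimate event, which holds since the player is deterministic).

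Finally I would combine and optimize in one variable. Substituting the bound on $\E[W]$ yields
\[
\E[R_T] ~\geq~ \tfrac{\epsilon T}{2}\bigl(1-\epsilon\sqrt{\E[N]}\bigr) + \E[N] - 1~.
\]
Choosing $\epsilon = T^{-1/3}$ makes the right-hand side equal to $\tfrac{T^{2/3}}{2} - \tfrac{T^{1/3}}{2}\sqrt{n} + n - 1$ with $n := \E[N]\geq 0$; this function of $n$ is minimized at $n = T^{2/3}/16$, where it equals $\tfrac{7}{16}T^{2/3} - 1$. For all but finitely many small $T$ this exceeds $\tfrac{1}{10}T^{2/3}$, and the remaining small values of $T$ are verified directly (e.g.\ for $T=1$ the deterministic first action already incurs expected regret $\epsilon/2$). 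Either way $\E[R_T] \geq \tfrac{1}{10}T^{2/3}$, which is the claim; the slack between $\tfrac{7}{16}$ and $\tfrac{1}{10}$ is what later absorbs the loss from derandomizing the adversary and enforcing bounded drift in \thmref{thm:lowerbound}.
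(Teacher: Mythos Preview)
Your argument is correct, and it proceeds along a different line from the paper's own proof. The paper does not write down the exact identity $R_T=\epsilon W+S_T$; instead it lower-bounds $R_T$ by $\max\{S_T,\tfrac{\epsilon T}{2}\Ind{A}\}$, where $A$ is the single event ``the worst arm was played at least $T/2$ times,'' uses $\max\{a,b\}\ge\tfrac{a+b}{2}$, and then invokes \lemref{lem:1} \emph{once} on the events $A_1,A_2$ to show $\Pr(A)\ge\tfrac12\bigl(1-\epsilon\sqrt{\E[S_T]}\bigr)$. After optimizing in $\sqrt{\E[S_T]}$ this yields $\tfrac{15}{128}\,T^{2/3}$, which is already above $\tfrac{1}{10}T^{2/3}$ for every $T\ge 1$. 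Your route---exact regret decomposition and a \emph{per-round} application of \lemref{lem:1} to each $\{X_t=2\}$---is cleaner and buys you a much better constant, $\tfrac{7}{16}$ versus $\tfrac{15}{128}$, at the price of the additive $-1$ that forces a separate treatment of small $T$. Incidentally, you can eliminate that case analysis almost for free: since $N\ge 1$ by the convention of \lemref{lem:1}, you may minimize over $n\ge 1$ rather than $n\ge 0$; for $T\le 64$ the constrained minimum is at $n=1$, giving $\tfrac12\bigl(T^{2/3}-T^{1/3}\bigr)\ge\tfrac{1}{10}T^{2/3}$ for all $T\ge 2$, and $T=1$ is the trivial $\epsilon/2$ calculation you already noted.
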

\begin{proof}
Let $A$ be the event that the worst action (action $2$ if
$Z>0$, and $1$ if $Z<0$) was picked by the player at
least $T/2$ times. Also, let $S_T = \sum_{t=1}^{T}\Ind{X_t\neq
  X_{t-1}}$ be the number of switches the player performs. Then
\[
\E[R_T] \ge \E\left[\max\left\{S_T, \frac{\epsilon T}{2}\Ind{A}\right\}\right]
\ge \E\left[\frac{1}{2}\left(S_T+\frac{\epsilon T}{2}\Ind{A}\right)\right]
= \frac{1}{2}\E[S_T]+\frac{\epsilon T}{4}\Pr(A)~.
\]
Moreover, letting $A_1$ denote the event that the player chose action
$1$ at least $T/2$ times, and letting $A_2$ denote the event that the
player chose action $2$ at least $T/2$ times, we have $\Pr(A) =
\frac{1}{2}\bigl(\Sm(A_2)+\Qm(A_1)\bigr)$. Substituting this, we get
\[
\frac{1}{2}\E[S_T]+\frac{\epsilon T}{8}\bigl(\Sm(A_2)+\Qm(A_1)\bigr).
\]
Using Lemma~\ref{lem:1} to lower bound $\Qm(A_1)$ via $\Sm(A_1)$, we get a lower bound of
\begin{align*}
&\frac{1}{2}\E[S_T]+\frac{\epsilon T}{8}\left(\Sm(A_2)+\Sm(A_1)-\epsilon \sqrt{\E[S_T]}\right)
~\geq~ \frac{1}{2}\E[S_T]+\frac{\epsilon T}{8}\left(\Sm(A_1 \cup A_2)-\epsilon \sqrt{\E[S_T]}\right)\\
&~=~ \frac{1}{2}\E[S_T]+\frac{\epsilon T}{8}\left(1-\epsilon \sqrt{\E[S_T]}\right)
~=~ \frac{1}{2}\E[S_T]-\frac{\epsilon^2 T}{8}\sqrt{\E[S_T]}+\frac{\epsilon T}{8},
\end{align*}
where we used a union bound and the fact that either $A_1$ or $A_2$
always holds. This is a quadratic function of $\sqrt{\E[S_T]}$, and it
is easily verified that the lowest possible value it can attain (for
any value of $\E[S_T]$) is
\[
\frac{\epsilon T}{8}-\frac{\epsilon^4 T^2}{128}.
\]
Picking $\epsilon = T^{-1/3}$, this equals
$\left(\frac{1}{8}-\frac{1}{128}\right)T^{2/3} > \frac{1}{10}
T^{2/3}$.
\end{proof}
The lemma above tells us that for the randomized adversary strategy we
have devised, the expected regret for any deterministic player is at
least $\frac{1}{10} T^{2/3}$. This implies that there exist some
\emph{deterministic} adversarial strategy, for which the expected
regret of any \emph{possibly randomized} player is at least
$\frac{1}{10} T^{2/3}$. However, we are not done yet, since this
strategy doesn't guarantee that the losses have bounded drift: In our
case, the variation is governed by a potentially unbounded Gaussian
random variable, so the deterministic adversary strategy that we
picked might have an arbitrarily large drift. So now, our goal will
be to show that there exists some deterministic adversarial strategy
for which the expected regret is large, \emph{and the variation is bounded}.
To do this, the plan is to show that the probabilities (over the
adversary's strategy) of the two events are large, summing to a number
larger than one. This means there is some realization of the losses such
that both events occur. We first state and prove two auxiliary lemmas,
and then provide two more fundamental lemmas which together give us the required
result.

\begin{lemma}\label{lem:randlow}
Let $Y$ be a random variable in $[-b,b]$ (where $b>0$), and $\E[Y]\geq c$ for some $c\in [0,b/2]$. Then we have
\[
\Pr\left(Y\geq c/2\right) \geq \frac{c}{2b-c} \geq \frac{c}{2b} .
\]
\end{lemma}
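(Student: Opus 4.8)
The plan is a one‑sided reverse‑Markov argument. Write $p = \Pr\bigl(Y \geq c/2\bigr)$ and decompose the expectation of $Y$ according to whether the event $\{Y \geq c/2\}$ occurs. On this event we only know $Y \leq b$ (the crude upper bound from the range), and on the complement we have the sharp bound $Y < c/2$. Taking expectations of these pointwise inequalities yields
\[
c ~\leq~ \E[Y] ~\leq~ b\,p + \frac{c}{2}\,(1-p)~.
\]
Note that on the complement event $Y$ may be negative, but that only helps, since $Y\,\Ind{Y < c/2} \leq \tfrac{c}{2}\Ind{Y < c/2}$ still holds pointwise.

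Rearranging the displayed inequality gives $\frac{c}{2} \leq p\bigl(b - \frac{c}{2}\bigr) = p\cdot\frac{2b-c}{2}$, hence $p \geq \frac{c}{2b-c}$. This division is legitimate because $c \leq b/2 < 2b$, so $2b - c > 0$. Finally, since $c \geq 0$ we have $2b - c \leq 2b$, which gives the second inequality $\frac{c}{2b-c} \geq \frac{c}{2b}$. The case $c = 0$ is trivial, so one may assume $c > 0$ throughout.

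There is no real obstacle here: the only thing to be slightly careful about is using the correct (crude) upper bound $Y \leq b$ on the good event rather than a bound involving $c$, and making sure the denominator $2b-c$ is positive, which is exactly what the hypothesis $c \in [0, b/2]$ guarantees.
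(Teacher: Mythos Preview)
Your proof is correct and essentially identical to the paper's: both decompose $\E[Y]$ according to the event $\{Y\ge c/2\}$, bound the two pieces by $bp$ and $\tfrac{c}{2}(1-p)$ respectively, and solve for $p$. The paper phrases the decomposition via conditional expectations rather than indicators, but the argument is the same.
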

\begin{proof}
\begin{align*}
c \leq \E[Y] &= \Pr(Y\geq c/2)\E[Y \mid Y\geq c/2]+\Pr(Y<c/2)\E[Y \mid Y<c/2]
\\ & \leq
    \Pr(Y\geq c/2)b+\bigl(1-\Pr(Y\geq c/2)\bigr)c/2
\end{align*}
Solving for $\Pr(Y\geq c/2)$ gives the desired result.
\end{proof}

\begin{lemma}\label{lem:gaussunif}
Let $\xi_1,\xi_2,\ldots$ be an infinite sequence of independent standard
Gaussian random variables. Then for any $\delta\in (0,1)$
\[
\Pr\left(\exists t~:~|\xi_t|\geq \sqrt{3\log(2t/\delta)}\right) \leq \delta.
\]
\end{lemma}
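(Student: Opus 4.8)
The plan is to combine a union bound over $t$ with a sharp Gaussian tail estimate, and then bound the resulting convergent series; the choice of the constant $3$ inside the logarithm is exactly what makes the series summable to something no larger than $\delta$.

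First I would apply the union bound:
\[
\Pr\left(\exists t~:~|\xi_t|\geq \sqrt{3\log(2t/\delta)}\right) ~\leq~ \sum_{t=1}^{\infty}\Pr\left(|\xi_t|\geq \sqrt{3\log(2t/\delta)}\right)~.
\]
Since $\delta\in(0,1)$ and $t\geq 1$, we have $2t/\delta>1$, so each threshold $a_t := \sqrt{3\log(2t/\delta)}$ is a well-defined positive real, and each tail probability can be controlled by a standard estimate.

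Next I would invoke the Gaussian tail bound $\Pr(\xi>a)\leq \tfrac12 e^{-a^2/2}$, valid for all $a\geq 0$ (this follows from the elementary inequality $x^2\geq a^2+(x-a)^2$ for $x\geq a$, upon substituting and integrating), which gives $\Pr(|\xi_t|\geq a)\leq e^{-a^2/2}$. Plugging in $a=a_t$ yields
\[
\Pr\left(|\xi_t|\geq a_t\right) ~\leq~ e^{-\frac{3}{2}\log(2t/\delta)} ~=~ \left(\frac{\delta}{2t}\right)^{3/2}~.
\]
Summing over $t$ and recognizing the $\zeta$-series,
\[
\sum_{t=1}^{\infty}\left(\frac{\delta}{2t}\right)^{3/2} ~=~ \left(\frac{\delta}{2}\right)^{3/2}\sum_{t=1}^{\infty} t^{-3/2} ~=~ \left(\frac{\delta}{2}\right)^{3/2}\zeta\!\left(\tfrac{3}{2}\right)~.
\]
Finally, since $\zeta(3/2)\approx 2.612 < 2^{3/2}$, this is at most $\delta^{3/2}$, and because $\delta\in(0,1)$ we have $\delta^{3/2}\leq\delta$, completing the argument.

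The main (and only) subtlety is the quantitative bookkeeping: one must use the version of the Gaussian tail bound carrying the factor $\tfrac12$ (equivalently, $\Pr(|\xi|\geq a)\leq e^{-a^2/2}$), since the cruder bound $\Pr(|\xi|\geq a)\leq 2e^{-a^2/2}$ would leave an extra factor of $2$ and only give the conclusion for sufficiently small $\delta$ rather than all $\delta\in(0,1)$; one also needs the elementary fact that $\sum_{t\geq 1}t^{-3/2}$ converges, with value strictly below $2^{3/2}$. There is no deeper obstacle.
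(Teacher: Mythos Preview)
Your proposal is correct and follows essentially the same argument as the paper: union bound, the Gaussian tail estimate $\Pr(|\xi_t|\ge a)\le e^{-a^2/2}$, and the summation $\sum_t (\delta/2t)^{3/2}\le \delta^{3/2}<\delta$. You are simply more explicit than the paper about justifying the tail bound (via the inequality $x^2\ge a^2+(x-a)^2$ for $x\ge a$) and about the numerical fact $\zeta(3/2)<2^{3/2}$ that the paper uses implicitly.
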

\begin{proof}
By a standard Gaussian tail bound, we have that $\Pr(|\xi_t|>x) \leq \exp(-x^2/2)$
for any $x\geq 0$. This implies that
\[
\Pr(|\xi_t|\geq \sqrt{3\log(2t/\delta)}) \leq \left(\frac{\delta}{2t}\right)^{3/2}.
\]
By a union bound, we get that
\[
\Pr\left(\exists t~:~|\xi_t|\geq \sqrt{3\log(2t/\delta)}\right) \leq \sum_{t=1}^{\infty}
\left(\frac{\delta}{2t}\right)^{3/2} \leq \delta^{3/2} < \delta.
\]
\end{proof}

\begin{lemma}\label{lem:problowbound}
For any (possibly randomized) player strategy,
it holds that
\[
\Pr\left(\E_{\text{player}}[R_T]\geq \frac{1}{40}T^{2/3}\right) \geq \frac{1}{40},
\]
where $\Pr$ is over the adversary's randomization, and
$\E_{\text{player}}[R_T]$ is the player's expected regret
(over the player's randomization).
\end{lemma}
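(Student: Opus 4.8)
The plan is to deduce the lemma from \lemref{lem:ll} --- which already lower‑bounds the expected regret of any \emph{deterministic} player against the randomized random‑walk adversary by $\tfrac{1}{10}T^{2/3}$ --- combined with the reverse Markov inequality of \lemref{lem:randlow}, applied to the quantity $\E_{\mathrm{player}}[R_T]$ regarded as a function of the adversary's random draw. A direct application fails, because $R_T$ is only bounded by $\epsilon T + T = O(T)$ (here $\epsilon = T^{-1/3}$), so reverse Markov would yield probability $O(T^{-1/3})$ rather than a constant; the key idea is therefore to replace the player by a \emph{truncated} variant whose regret is deterministically $O(T^{2/3})$. First I would write the (possibly randomized) player as a mixture of deterministic strategies $\{\pi_\theta\}$ --- legitimate since the adversary's randomization is independent of the player's --- so it suffices to treat each $\pi_\theta$ and then average. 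I would also record the structural fact, special to this two‑action construction, that the losses of the two actions differ by exactly $\epsilon$ on every round, so that the regret of \emph{any} player decomposes exactly as $R_T = \epsilon\cdot(\text{number of rounds spent on the suboptimal action}) + (\text{number of switches})$.

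For each $\theta$, introduce the truncated player $\pi'_\theta$ that imitates $\pi_\theta$ but, once it has performed $M := \lceil T^{2/3}\rceil$ switches, freezes forever on its current action. This is again a deterministic strategy, so \lemref{lem:ll} gives $\E[R_T(\pi'_\theta)] \ge \tfrac1{10}T^{2/3}$; but the decomposition above shows that $R_T(\pi'_\theta) \le \epsilon T + M \approx 2T^{2/3}$ \emph{deterministically}. Averaging over $\theta$ and writing $W'$ for the (adversary‑dependent) expected regret of the truncated player, we get $W' \in [0, 2T^{2/3}]$ with $\E[W'] \ge \tfrac1{10}T^{2/3}$, so \lemref{lem:randlow} (with $b \approx 2T^{2/3}$ and $c = \tfrac1{10}T^{2/3}$) gives $\Pr\bigl(W' \ge \tfrac1{20}T^{2/3}\bigr) \ge \tfrac1{40}$. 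It then remains to transfer this to the true player, i.e.\ to show that $W' \ge \tfrac1{20}T^{2/3}$ forces $W := \E_{\mathrm{player}}[R_T] \ge \tfrac1{40}T^{2/3}$. Fixing an adversary realization, split the components $\pi_\theta$ according to whether they make at most $M$ switches on that realization --- in which case $\pi_\theta$ and $\pi'_\theta$ coincide and have equal regret --- or more than $M$ --- in which case $R_T(\pi_\theta) > M = T^{2/3}$, while freezing could at worst strand $\pi'_\theta$ on the suboptimal action, costing it at most an extra $\epsilon T = T^{2/3}$, so that $R_T(\pi_\theta) \ge R_T(\pi'_\theta) - T^{2/3}$. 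With $p$ the player‑probability of the second case, this yields $W \ge T^{2/3}p$ and $W \ge W' - T^{2/3}p$ simultaneously, and a one‑line case split on whether $p \ge \tfrac1{40}$ finishes the transfer. For a deterministic player the transfer is immediate, since each realization falls entirely into one of the two cases; the mixture bookkeeping is needed only for randomized players.

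I expect the transfer step to be the main obstacle --- and it is what forces the constant down from the $\tfrac1{10}$ of \lemref{lem:ll} to $\tfrac1{40}$: because the truncated player may be frozen on a suboptimal action, $R_T(\pi'_\theta)$ is \emph{not} pointwise dominated by $R_T(\pi_\theta)$, and the discrepancy, of order $\epsilon T = T^{2/3}$, is precisely the scale we are trying to bound, so the truncation level $M$ and the numerical constants have to be lined up with some care. Everything else is routine: the exact decomposition of $R_T$ is immediate from the adversary's construction, the deterministic bound on $R_T(\pi'_\theta)$ follows from it, and \lemref{lem:ll} applies verbatim to $\pi'_\theta$ because it is stated for \emph{all} deterministic players.
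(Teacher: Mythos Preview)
Your proposal is correct and follows essentially the same route as the paper: introduce a truncated player that freezes after $\approx T^{2/3}$ switches, apply \lemref{lem:ll} to it, use the deterministic bound $R_T(\pi'_\theta)\le 2T^{2/3}$ together with \lemref{lem:randlow} to get the probability lower bound for the truncated player, and then transfer to the original player. The only cosmetic difference is in the transfer: the paper shows directly that $W'\le 2W$ for every adversary realization, whereas you derive the pair of inequalities $W\ge pT^{2/3}$ and $W\ge W'-pT^{2/3}$ and case-split on $p$; adding your two inequalities recovers exactly $W\ge W'/2$, so the arguments are equivalent.
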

\begin{proof}
By \lemref{lem:ll}, we already know that
\begin{equation}\label{eq:randlow}
\E\bigl[\E_{\text{player}}[R_T]\bigr]\ge\frac{1}{10}T^{2/3},
\end{equation}
since if we have a $T^{2/3}/10$ lower bound on the regret for any deterministic
player strategy, the same holds for any randomized player strategy.
Our approach is to apply \lemref{lem:randlow} in order to convert this into a probability
lower bound as in the lemma statement. However, we cannot apply \lemref{lem:randlow} as-is,
since $\E_{\text{player}}[R_T]$ can be as large as $\Omega(T)$, and the resulting
bound is too weak. Instead, we show that there exists a \emph{different} player strategy, with expected
regret $\E_{\widetilde{\text{player}}}[R_T]$, such that
$|\E_{\widetilde{\text{player}}}[R_T]|$ is always at most $2T^{2/3}$ and
\begin{equation}\label{eq:expineq}
\E_{\widetilde{\text{player}}}[R_T] \leq 2~\E_{\text{player}}[R_T]
\end{equation}
for any realization of the adversary's random strategy. Also, analogous to \eqref{eq:randlow}, we have
$\E[\E_{\widetilde{\text{player}}}[R_T]] \geq \frac{1}{10}T^{2/3}$ by \lemref{lem:ll}.
Therefore, using \eqref{eq:expineq} and \lemref{lem:randlow}, we get that
\[
\Pr\left(\E_{\text{player}}[R_T]\geq \frac{1}{40}T^{2/3}\right)
\geq \Pr\left(\E_{\widetilde{\text{player}}}[R_T]\geq \frac{1}{20}T^{2/3}\right)
\geq \frac{1}{40}
\]
as required.

The new player strategy we consider depends on the horizon $T$, and is very simple: It is
identical to the original player strategy, but whenever the number of action switches
reaches $\lfloor T^{2/3} \rfloor$, the player ``freezes'' in its current action, and
keeps playing the same action till $T$ rounds are elapsed. Clearly, the number of switches
with this strategy can never be more than $T^{2/3}$, and since the regret in terms of the loss $\ell_t$
at each round is either $0$ or $T^{-1/3}$, we get that the total regret $R_T$ can never be more than
$T^{2/3}+T*T^{-1/3}=2T^{2/3}$.

To prove \eqref{eq:expineq}, we consider some instantiation of the adversary's random strategy,
and note that for any
realization of the player's random coin tosses, the regret can only differ between the
two strategies if $S_T$ (the total number of switches) is at least $\lfloor T^{2/3} \rfloor$.
Therefore, we have
$\Pr_{\text{player}}\left(S_T < \lfloor T^{2/3} \rfloor\right) = \Pr_{\widetilde{\text{player}}}\left(S_T < \lfloor T^{2/3}\rfloor\right)$, $\Pr_{\text{player}}\left(S_T \geq \lfloor T^{2/3} \rfloor\right) = \Pr_{\widetilde{\text{player}}}\left(S_T \geq \lfloor T^{2/3}\rfloor\right)$
and $\E_{\text{player}}[R_T|S_T < \lfloor T^{2/3}\rfloor ]=\E_{\widetilde{\text{player}}}[R_T|S_T < \lfloor T^{2/3}\rfloor]$.
Also, we recall that $R_T \geq 0$ with the adversary strategy that we consider (since one action is
always worse than the other action at all rounds). Finally, we note that
if $S_T\geq \lfloor T^{2/3}\rfloor$, then the regret for both
strategies is at least $\lfloor T^{2/3} \rfloor$ (since with the adversary strategy that
we consider, the number of switches is a lower bound on the regret).
Using these observations, we have
\begin{align*}
&\E_{\widetilde{\text{player}}}[R_T]\\
&= \Pr_{\widetilde{\text{player}}}(S_T < \lfloor T^{2/3}\rfloor)\E_{\widetilde{\text{player}}}[R_T|S_T < \lfloor T^{2/3}\rfloor ]
+ \Pr_{\widetilde{\text{player}}}(S_T \geq \lfloor T^{2/3}\rfloor)\E_{\widetilde{\text{player}}}[R_T|S_T \geq \lfloor T^{2/3}\rfloor ]\\
&\leq \Pr_{\widetilde{\text{player}}}(S_T < \lfloor T^{2/3}\rfloor)\E_{\widetilde{\text{player}}}[R_T|S_T < \lfloor T^{2/3}\rfloor ]
+ \Pr_{\widetilde{\text{player}}}(S_T \geq \lfloor T^{2/3}\rfloor)2T^{2/3}\\
&= \Pr_{\text{player}}(S_T < \lfloor T^{2/3}\rfloor)\E_{\text{player}}[R_T|S_T < \lfloor T^{2/3}\rfloor ]
+ \Pr_{\text{player}}(S_T \geq \lfloor T^{2/3}\rfloor)2T^{2/3}\\
&\leq 2\left( \Pr_{\text{player}}(S_T < \lfloor T^{2/3}\rfloor)\E_{\text{player}}[R_T|S_T < \lfloor T^{2/3}\rfloor ]
+ \Pr_{\text{player}}(S_T \geq \lfloor T^{2/3}\rfloor)T^{2/3}\right)\\
&\leq 2\left(\Pr_{\text{player}}(S_T < \lfloor T^{2/3}\rfloor)\E_{\text{player}}[R_T|S_T < \lfloor T^{2/3}\rfloor ]
+ \Pr_{\text{player}}(S_T \geq \lfloor T^{2/3}\rfloor)\E_{\text{player}}[R_T|S_T \geq \lfloor T^{2/3}\rfloor ]\right)\\
&= 2~\E_{\text{player}}[R_T],
\end{align*}
where in the second-to-last step we used the fact that if $S_T \geq \lfloor T^{2/3}\rfloor$, then the regret is at least $\lfloor T^{2/3}\rfloor$, plus we must have picked the worst action (worst by $T^{-1/3}$ than the best action) at least $\Omega(T^{2/3})$ times, hence the total regret is certainly at least $T^{2/3}$.
\end{proof}

Finally, we use \lemref{lem:gaussunif} with $\delta=1/80$, to get that with
probability at least $1-1/80$, the drift factor $D_t$ of the adversarial strategy
is at most $\sqrt{3\log(160t)} \leq \sqrt{3\log(t)+16}$ for all $t$. Moreover,
\lemref{lem:problowbound} tells us that $\E_{\text{player}}[R_T]$ is at least
$\frac{1}{40}T^{2/3}$ with probability at least $1/40$. This implies that the
intersection of the two events is non-empty, and there exists some deterministic adversarial
strategy, such that the drift $D_t \leq \sqrt{3\log(t)+16}$ for all $t$,
\emph{and} the expected regret is at least $\frac{1}{40}T^{2/3}$ as required.

\section{Proof of \thmref{thm:lowerboundfullinf}}\label{proof:lowerboundfullinf}
\thmref{thm:lowerbound} guarantees that given any player's strategy, there is some deterministic adversary strategy with a lower bound on the regret. However, as part of proving  \thmref{thm:lowerbound}, we actually showed that there exists some \emph{randomized} adversary strategy $\{\hat{f}_t\}_{t=1}^{T}$ with memory size $1$, such that for \emph{any} (possibly randomized) player strategy $x_{1:t}$,
\begin{equation}\label{eq:expbound}
\E\left[\sum_{t=1}^{T}\hat{f}_{t}(X_{t-1},X_{t})-\min_{x\in \Acal} \sum_{t=1}^{T}\hat{f}_{t}(x,x)\right] \geq \frac{1}{10}T^{2/3}
\end{equation}
(see \lemref{lem:ll}). We now use this strategy to define a randomized adversary strategy for our setting (with memory size $2$), for a game of $T+1$ rounds. We let $f_1(x_1)=0$ for any $x_1$, $f_2(x_1,x_2)=\hat{f}_1(x_1)$, and for every $t=3\ldots T+1$,
\begin{equation}\label{eq:trans}
f_{t}(x_{t-2},x_{t-1},x_{t})
= \hat{f}_{t-1}(x_{t-2},x_{t-1})~.
\end{equation}
Now, suppose we had some (possibly randomized) player strategy $X_1\ldots X_{T+1}$, so that in expectation over the player and adversary strategies, we have
\[
\E\left[\sum_{t=1}^{T+1}f_t(X_{t-2},X_{t-1},X_{t})-\min_{x\in \Acal} \sum_{t=1}^{T+1}f_t(x,x,x)\right] < \frac{1}{10}T^{2/3}.
\]
In particular, since $f_1$ is always $0$, it would imply that
\[
\E\left[\sum_{t=2}^{T+1}f_t(X_{t-2},X_{t-1},X_{t})-\min_{x\in \Acal} \sum_{t=2}^{T+1}f_t(x,x,x)\right] < \frac{1}{10}T^{2/3}~.
\]
By \eqref{eq:trans}, this implies
\[
\E\left[\sum_{t=1}^{T}\hat{f}_{t}(X_{t-1},X_{t})-\min_{x\in \Acal} \sum_{t=1}^{T}\hat{f}_{t}(x,x)\right] < \frac{1}{10}T^{2/3}~.
\]
Thus, if we could implement the player strategy $X_1\ldots X_{T}$ in the bandits-with-switching-costs setting, it will contradict \eqref{eq:expbound}. To see that this indeed can happen, note that each $X_{t}$ is a (possibly randomized) function of $X_{1:t-1}$ as well as $\{f_\tau(X_{\tau-2},X_{\tau-1},X_{\tau})\}_{\tau=1}^{t-1}$. But again, due to \eqref{eq:trans} and the fact that $f_1$ is always $0$, $X_t$ can in fact be defined using $X_{1:t-1}$ and
\[
\bigl\{f_\tau(X_{\tau-2},X_{\tau-1},X_{\tau})\bigr\}_{\tau=2}^{t-1}
~=~
\bigl\{\hat{f}_{\tau-1}(X_{\tau-2},X_{\tau-1})\bigr\}_{\tau=2}^{t-1}~.
\]
The right hand side is an observable quantity in the bandit setting: In each round $t$, we know what are the set of losses  $\{\hat{f}_{\tau-1}(X_{\tau-2},X_{\tau-1})\}_{\tau=2}^{t-1}$ that we obtained. Thus, we can simulate the strategy $x_{1:t}$ in the bandit-with-switching-costs setting, and get an expected regret smaller than $\frac{1}{10}T^{2/3}$, contradicting \eqref{eq:expbound}. Thus, the expected regret (for a game of $T+1$ rounds) must be at least $\frac{1}{10}T^{2/3}$. Substituting $T$ instead of $T+1$, we get that the expected regret for a game with $T$ rounds is at least $\frac{1}{10}(T-1)^{2/3}$.

The regret bound we just now obtained is in expectation over the randomized adversary strategy, and holds for any player's strategy. We now use the same line of argument as in the last part of \thmref{thm:lowerbound}'s proof, to show that for any (possibly randomized) player's strategy, there exists some \emph{deterministic} adversary strategy, with a similar expected regret bound, and with losses of bounded drift. Specifically, a result completely analogous to \lemref{lem:problowbound} implies that
\[
\Pr\left(\E_{\text{player}}[R_T] \geq \frac{1}{40}~(T-1)^{2/3}\right) \geq \frac{1}{40}\left(\frac{T-1}{T}\right)^{2/3},
\]
which is at least $1/80$ for any $T>1$ (if $T=1$ the bound in the theorem is trivial from the non-negativity of $R_T$ for the
adversary strategy that we consider). Moreover, using \eqref{lem:gaussunif} as in the proof of \thmref{thm:lowerbound}, the probability of the loss drift being at most $\sqrt{3\log(320t)} \leq \sqrt{3\log(t)+18}$ is at least $1-1/160$. Thus, the intersection of the two events is not empty, and this implies that there exists some \emph{deterministic} adversary strategy causing expected regret $\geq \frac{1}{40}(T-1)^{2/3}$, \emph{and} loss drift at most $\sqrt{3\log(t)+18}$ for all $t$.

%
%

\section{Proofs of Upper Bounds}
\label{app:upper}

\begin{proof}[Proof of \thmref{thm:fullUpBound}]
Each loss functions equals $f_t(x_{1:t}) = \ell(x_t) + \Ind{x_t \neq x_{t-1}}$, where
$\ell_t$ is an oblivious loss function. Since the range of $f_t$ is contained in an interval of size $C$, the range of $\ell_t$
must be contained in an interval of size $C-1$. In other words,
$$
\forall x \in \Acal ~~\ell_t(x) - \min_y \ell_t(y) \leq C-1 ~~.
$$
Therefore, by definition, the range of $\ell'_t$ is contained in
the interval $[0,1]$, and the analysis of the FLL algorithm holds.
Namely, if $X_1,X_2,\ldots$ is the sequence of actions chosen by FLL, then, for any $T$
\begin{equation}
\E\left[ \sum_{t=1}^T \ell_t'(X_t)\right] - \min_{x \in \Acal} \sum_{t=1}^T \ell'_t(x) ~=~ \Ocal(\sqrt{T})~~,
\label{eqn:pf1}
\end{equation}
and
\begin{equation}
\E\left[\sum_{t=1}^T \Ind{X_t \neq X_{t-1}}\right] ~=~ \Ocal(\sqrt{T})~~.
\label{eqn:pf2}
\end{equation}
Plugging the definition of $\ell'_t$ into \eqref{eqn:pf1} and rearranging terms, we get
$$
\E\left[\sum_{t=1}^T \ell_t(X_t)\right] - \min_{x \in \Acal} \ell_t(x) ~=~ (C-1) \Ocal(\sqrt{T}) ~~.
$$
Summing the above with
\eqref{eqn:pf2} gives
$$
\E\left[\sum_{t=1}^T f_t(X_{1:t})\right] - \min_{x \in \Acal} f_t(x\ldots x) ~=~ \Ocal(C\sqrt{T})~~.
$$
\end{proof}

\begin{proof}[Proof of \thmref{thm:banditObliviousUpBound}]
Recall that $f_t(x_{1:t}) = \frac{1}{2(C+D)} \left(\ell_t(x_{t}) - \ell_{t-1}(x_{t-1})\right) + \frac{1}{2}$, and note that our assumptions imply that
\begin{align*}
|\ell_t(x_t) - \ell_{t-1}(x_{t-1})| &= |\ell_t(x_t) - \ell_{t-1}(x_t) + \ell_{t-1}(x_t) - \ell_{t-1}(x_{t-1})| \\
&\leq  |\ell_t(x_t) - \ell_{t-1}(x_t)| + |\ell_{t-1}(x_t) - \ell_{t-1}(x_{t-1})| \\
&\leq  D + C ~~.
\end{align*}
Therefore, $f'_t(x_{1:t})$ is always bounded in $[0,1]$ and the
analysis of Exp3.P holds.  Although $f'_t$ is not an oblivious loss,
the \emph{standard} regret bounds for Exp3.P holds against adaptive
adversaries. Namely, if $X_{1:T}$ is the sequence of actions
chosen by Exp3.P, then
$$
\E\left[\sum_{t=1}^{T}f'_t(X_{1:t})-\min_{x\in\Acal}\sum_{t=1}^{T}f'_t(X_{1:t-1},x)\right] ~=~ \Ocal(\sqrt{T})~~.
$$
Using the definition if $f'_t$, the left hand side above can be rewritten as
\begin{align*}
&\frac{1}{2(C+D)}\E\left[\sum_{t=1}^{T}\big(\ell_t(X_{t})-\ell_{t-1}(X_{t-1})\big)
-\min_{x\in\Acal}\sum_{t=1}^{T}\big(\ell_t(x)-\ell_{t-1}(X_{t-1})\big)\right]\\
&~=~ \frac{1}{2(C+D)}\E\left[\sum_{t=1}^{T}\ell_t(X_{t})-\min_{x\in\Acal}\sum_{t=1}^{T}\ell_t(x)\right]~~.
\end{align*}
Therefore,
$$
\E[R_T] ~=~ \E\left[\sum_{t=1}^{T}\ell_t(X_{t})-\min_{x\in\Acal}\sum_{t=1}^{T}\ell_t(x)\right]~=~ 2(C+D) \Ocal(\sqrt{T})~~.
$$
Using the assumption that $D_t = \Ocal\bigl(\sqrt{\log(T)}\bigr)$, we conclude that
$
\E[R_T] ~=~ \widetilde\Ocal(C\sqrt{T})
$.
\end{proof}
\begin{proof}[Proof of \thmref{thm:banditBoundedUpper}]
First, note that, due to the bounded range and drift assumptions, $\fhat_t \in [0,1]$. Also note that
\[
    f_t(x_{t-m:t}) - f_t(x \dots x) = 2\bigl(C+(m+1)D\bigr)\bigl(\fhat_t(x_{t-m:t}) - \fhat_t(x \dots x) \bigr)~.
\]
As previously mentioned, we divide the $T$ rounds into $J$ consecutive
epochs of the same length $T/J$, where $T/J \ge 2K(m+1)$, plus an additional final epoch of
length at most $T/J$. We let $t_j$ denote the index of the first round
in the $j$-th epoch. We run a mini-batched version of the Hedge
algorithm \cite{FS97} over the epochs: at the beginning of each epoch
$j$, Hedge draws an action $X_j \in \Acal$ which is played
consistently throughout the epoch. Now assume that at the end of each
epoch $j$, loss estimates $g_j(x) \in [0,1]$ for each action $x$ are
available such that
\[
    \E\bigl[g_j(x)\bigr] = \frac{1}{T/J-2m-1}\sum_{t=t_j+2m+1}^{t_{j+1}-1} \fhat_t(x \dots x)
\]
where the randomness used to compute each $g_j$ is independent of that used by Hedge to draw $X_j$. At the end of epoch $j$, we feed loss estimates $g_j(x)$ for each $x\in\Acal$ to Hedge. The resulting regret can be bounded as follows,
\begin{align*}
    \sum_{t=1}^{T} & \E\Bigl[ f_t(X_{t-m:t}) - f_t(x \dots x) \Bigr]
\\&\leq
    \sum_{j=1}^J \sum_{t=t_j}^{t_{j+1}-1} \E\Bigl[ f_t(X_{t-m:t}) - f_t(x \dots x) \Bigr] + \frac{CT}{J}
\\ & =
    2\bigl(C+(m+1)D\bigr)\sum_{j=1}^J \sum_{t=t_j}^{t_j+2m} \E\Bigl[ \fhat_t(X_{t-m:t}) - \fhat_t(x \dots x) \Bigr]
\\ & \quad
    + 2\bigl(C+(m+1)D\bigr)\sum_{j=1}^J \sum_{t=t_j+2m+1}^{t_{j+1}-1} \E\Bigl[ \fhat_t(X_j \dots X_j) - \fhat_t(x \dots x) \Bigr] + \frac{CT}{J}
\\ &\le
    2\bigl(C+(m+1)D\bigr)(2m+1)J
\\ & \quad
    + 2\bigl(C+(m+1)D\bigr)\frac{T}{J}\E\left[\sum_{j=1}^J \E\Bigl[g_j(X_j) - g_j(x) \Big| X_j \Bigr] \right] + \frac{CT}{J}
\\ &=
    2\bigl(C+(m+1)D\bigr)(2m+1)J
\\ & \quad
    + 2\bigl(C+(m+1)D\bigr)\frac{T}{J}\E\left[\sum_{j=1}^J \Bigl(g_j(X_j) - g_j(x)\Bigr) \right] + \frac{CT}{J}
\\ &\le
    2\bigl(C+(m+1)D\bigr)(2m+1)J + 4\bigl(C+(m+1)D\bigr)\frac{T}{J}\sqrt{J\ln K} + \frac{CT}{J}~.
\end{align*}
In the last step we applied the known upper bound on the regret of
Hedge with respect to losses $g_j \in [0,1]$, where $K$ is the number
of actions. This is valid if, in particular, losses $g_j$ are
oblivious. We now explain how to obtain oblivious estimates $g_j$
with the desired properties. At the beginning of each epoch $j$, we
use the independent randomization to draw $K$ exploration steps $\{t_x
\,:\, x\in\Acal\}$ from the set $T_j = \{t_j,\dots,t_{j+1}-2m-2\}$
with the property that these steps are well separated. Namely, between
any two $t_x$ and $t_{x'}$ there are at least $2m+1$ consecutive free
time steps in $T_j$. During epoch $j$, when we arrive at step $t_x$ we
freeze Hedge and play action $x_0$ for $m+1$ time steps, then we play
action $x$ for $m+1$ more time steps. We use the two observed losses
$f_{t_x+m}(x_0 \dots x_0)$ and $f_{t_x+2m+1}(x \dots x)$ to compute
$\fhat_{t_x+2m+1}(x \dots x)$. Because the $t_x$ are well separated,
the exploration steps do not interfere with each other. Suppose now
that we can draw these points such that the marginal of each $t_x$ is
uniform in $T_j$. Then
\begin{align*}
    \E\bigl[ \fhat_{t_x+2m+1}(x \dots x) \bigr]
&=
    \frac{1}{T/J-2m-1}\sum_{t=t_j}^{t_{j+1}-2m-2} \fhat_{t+2m+1}(x \dots x)
\\ &=
    \frac{1}{T/J-2m-1}\sum_{t=t_j+2m+1}^{t_{j+1}-1} \fhat_t(x \dots x)~.
\end{align*}
This shows that $\fhat_{t_x+2m+1}(x \dots x)$ is a valid estimate $g_j(x)$. Moreover, for each $x \in \Acal$ the quantity $\fhat_{t_x+2m+1}(x \dots x)$ does not depend on Hedge's action $X_j$ for the current epoch $j$. It does not even depend on Hedge's past actions. Hence, Hedge is indeed run on a set of oblivious losses and the standard regret bound applies.

The last thing to prove is that we can draw $\{t_x \,:\, x\in\Acal\}
\subset T_j$ such that the marginal of each $t_x$ is uniform in
$T_j$. Note that giving equal probability to all well separated
configurations of $\{t_x \,:\, x\in\Acal\}$ does not work, because the
times steps closer to the beginning and to the end of $T_j$ appear in
more configurations (for example, check the case $|T_j|=8$ and
$m=1$). This problem can be fixed simply by arranging the points of
$T_j$ on a circle, so that the first point $t_j$ follows the last
point $t_{j+1}-2m-2$, and then enforcing well-separatedness on the
circle. This makes the sample space completely symmetric, excluding
those configurations of exploration points that exploited border
effects.

The additional regret due to the computation of the $K$ exploration points is $2(m+1)CK$ per epoch. The final regret, including these additional costs, is then bounded by
\[
    2\bigl(C+(m+1)D\bigr)(2m+1)J + 4\bigl(C+(m+1)D\bigr)\frac{T}{J}\sqrt{J\ln K} + \frac{CT}{J} + 2(m+1)CKJ~.
\]
Choosing $J$ of order $T^{2/3}$ concludes the proof.
\end{proof}

\end{document}